\newtheorem{theorem}{Theorem}[section]
\newtheorem{lemma}[theorem]{Lemma}
\DeclareMathOperator{\E}{E}
\def\set#1{\ensuremath{\left\{#1\right\}}}
\def\tup#1{\ensuremath{\left\langle#1\right\rangle}}
\let\intersect=\cap
\def\cap{\mathop{\intersect}}
\let\eps=\varepsilon
\newcommand{\U}{\mathcal U}
\newcommand{\R}{\mathcal R}     
\newcommand{\PL}{{\textstyle \Pr_{\textup{\textsc{\scriptsize PL}}}}}
\newcommand{\PLC}{{\textstyle \Pr_{\textup{\textsc{\scriptsize PL+C}}}}}
\newcommand{\PrC}{{\textstyle \Pr_{\textup{\textsc{\scriptsize C}}}}}
\renewcommand{\AISTATS@appearing}{Authors' version of a paper appearing in the Proceedings of the \@conferenceordinal\,International Conference on Artificial
  Intelligence and Statistics (AISTATS) \@conferenceyear,  \@conferencelocation\@. PMLR: Volume  \@conferencevolume. Copyright
  \@conferenceyear\/ by the authors.}
\begin{document}

%

%

\twocolumn[

\aistatstitle{When the Universe is Too Big: Bounding Consideration Probabilities for Plackett--Luce Rankings}

\aistatsauthor{ Ben Aoki-Sherwood \And Catherine Bregou \And David Liben-Nowell }

\aistatsaddress{University of Colorado Boulder \And Carleton College \And Carleton College } 

\aistatsauthor{Kiran Tomlinson \And Thomas Zeng} 

\aistatsaddress{Microsoft Research \And University of Wisconsin--Madison}]
\runningauthor{Aoki-Sherwood, Bregou, Liben-Nowell, Tomlinson, Zeng}

\begin{abstract}
The widely used Plackett--Luce ranking model assumes that individuals rank items by making repeated choices from a universe of items. But in many cases the universe is too big for people to plausibly consider all options. In the choice literature, this issue has been addressed by supposing that individuals first sample a small consideration set and then choose among the considered items. However, inferring unobserved consideration sets (or item consideration probabilities) in this ``consider then choose'' setting poses significant challenges, because even simple models of consideration with strong independence assumptions are not identifiable, even if item utilities are known.
  We apply the consider-then-choose framework to top-$k$ rankings, where we assume rankings are constructed according to a Plackett--Luce model after sampling a consideration set.
  While item consideration probabilities remain non-identified in this setting, we prove that we can infer bounds on the relative values of consideration probabilities. 
  Additionally, given a condition on the expected consideration set size and known item utilities, we derive absolute upper and lower bounds on item consideration probabilities.
We also provide algorithms to tighten those bounds on consideration probabilities by propagating inferred constraints.
  Thus, we show that we can learn useful information about consideration probabilities despite not being able to identify them precisely.
  We demonstrate our methods on a ranking dataset from a psychology experiment with two different ranking tasks (one with fixed consideration sets and one with unknown consideration sets).
  This combination of data allows us to estimate utilities and then learn about unknown consideration probabilities using our bounds. 
\end{abstract}

\section{INTRODUCTION}

Among the wide-ranging topics studied in the behavioral sciences, predicting and explaining human choices is a central challenge across a range of disciplines.
(Why \emph{that} entr\'{e}e at \emph{that} caf\'{e} with \emph{that} person, of all the restaurants you can think of, of all your potential dates, of all the dishes on the menu?)
Settings where individuals select an item from a collection of available alternatives are well studied in the literature on \emph{discrete choice}~\citep{train2009discrete}, with applications ranging from marketing~\citep{chintagunta2011structural} and voting~\citep{thurner2000empirical} to transportation policy~\citep{horne2005improving} and recommender systems~\citep{danaf2019online}.
A closely related line of work studies \emph{rankings}~\citep{alvo2014statistical}, rather than single choices, often modeling a ranking as a sequence of discrete choices (selecting the top-ranked item, then the second, etc.).
The Plackett--Luce ranking model~\citep{plackett1975analysis,luce1959individual} exemplifies the link between discrete choice and ranking, positing that items at each of $k$ positions are selected in turn according to a logit choice model~\citep{mcfadden1973conditional}. Due to its convex negative log-likelihood and easily interpretable parameters, the Plackett--Luce model has seen substantial use in the machine learning literature~\citep{seshadri2020learning,zhao2016learning,zhao2019learning,nguyen2023efficient,saha2020pac}.

However, as Plackett--Luce models are applied to datasets of increasing size, it becomes implausible that individuals are able to weigh all of their options against each other. This issue has been largely ignored in the ranking setting, but has been discussed at length in the discrete choice literature.
For instance, in keeping with the framework of bounded rationality~\citep{simon1957models}, a prominent line of work in discrete choice suggests that selection is a two-stage process, where individuals first narrow their options to a small \emph{consideration set} from which their final selection is made~\citep{hauser1990evaluation,shocker1991consideration}.
These so-called ``consider then choose'' models have shown considerable promise in their explanatory power~\citep{roberts1991development,bacsar2004parameterized} and help address the issue of large item universes. But they suffer from a major issue: consideration sets are almost always unobserved (except in carefully controlled experimental settings) and cannot in general be identified from observed choice data~\citep{jagabathula2023demand,nierop10:_retriev_unobs_consid_sets_househ_panel_data}. However, if it were possible to derive meaningful information about consideration probabilities from rankings, this would be very valuable: for instance, in recommender systems, we can think of the core goal as identifying items with high utility but low consideration probability. These are the types of items that users would not discover naturally but still enjoy.

\paragraph{The present work.}
In this paper, we define the natural consider-then-rank model obtained by augmenting top-$k$ Plackett--Luce with the independent-consideration rule~\citep{manzini2014stochastic}, in which each item advances to the ranking stage randomly and independently with an unknown item-specific \emph{consideration probability}.
We term this model \emph{Plackett--Luce with consideration} (PL+C).
One might hope that richer observations---a ranking of $k$ items rather than just a single choice---would make it feasible to identify consideration probabilities, at least for large~$k$.
Unfortunately, our first result is negative: regardless of $k$, there are infinite families of consideration probabilities that generate the same distribution of observed data.
However, we show that it is possible to derive meaningful bounds on consideration probabilities, despite their non-identifiability.
In addition to observations of rankings, some (but not all) of our bounds draw on two types of information for learning about consideration probabilities: (1) known item utilities and (2) a lower bound on expected consideration set size.
(Later, we discuss settings where such information is indeed available.)
First, we derive relative bounds on the consideration probabilities of different items, of the form ``if item~$i$ has consideration probability $p_i$, then $j$'s consideration probability is greater than $f(p_i)$.''
The intuition is that if $i$ has higher utility then $j$, then we would expect to see $i$ ranked highly more often than $j$---but if instead $j$ outperforms $i$, then consideration must be the culprit, and $j$'s consideration probability must be higher than $i$'s.
Quantitatively, the $j$-vs.-$i$ gap in the frequency of being ranked highly tells us how much more often $j$ is considered than $i$. To make this bound more useful, we show how to infer that $i$ has higher utility than $j$ despite confounding from consideration, which invalidates standard Plackett--Luce inference.

These relative bounds provide useful information, but, absent additional anchor points, they still admit a large range of possible consideration probabilities. 
Motivated by this limitation, we then seek absolute upper and lower bounds on consideration probabilities.
For lower bounds, na\"{i}vely, we might hope that the fraction of observed rankings in which item $i$ appears would be a lower bound on $i$'s consideration probability.
However, this idea fails in our setting, as we assume that we only observe rankings of exactly $k$ items, and that any instance where fewer than $k$ items were considered is discarded before we observe it (or, equivalently, a consideration set is resampled).
However, we are able to rescue this approach if we can assume a mild bound on expected consideration set size, since we can then upper bound the probability that a sample was discarded.
Next, we turn to upper bounds on items' consideration probabilities. Our absolute upper bounds use exogenous knowledge of item utilities, which we learn in our data from survey questions with explicit consideration sets. 
Thus, if an item $i$ is ranked first less often than we would expect given its utility, then it must not be considered very often relative to the other items.
Using a pessimistic upper bound of 1 for other items' consideration probabilities can then yield a nontrivial upper bound on $i$'s consideration probability.
To wrap up our theoretical contributions, we provide algorithms to combine our absolute and relative bounds by propagating over a directed acyclic graph induced by our relative bounds.

Finally, we demonstrate how our methods can reveal meaningful information about consideration probabilities on real data.  In a psychology experiment about perceptions of U.S.\ history~\citep{putnam2018collective}, participants completed several tasks related to their view of the historical importance of particular U.S.\ states, such as naming the three states they believe contributed most to U.S.\ history. They were also provided with a random set of 10 states and asked to rate their percentage contribution to U.S.~history. (We convert the numerical scores to rankings.) These two settings, \emph{Top-3} and \emph{Random-10}, provide us both observations from fixed consideration sets (Random-10) and from unknown consideration sets (Top-3). We can thus estimate utilities in the absence of consideration from the Random-10 data and then estimate consideration from the Top-3 data, using our bounds. Our results on this data align with expectations about state history: original colonies Massachusetts and Virginia were the states with the highest range of possible consideration probabilities ($0.59$ to $1.00$), while Missouri, an anecdotally often-forgotten state~\citep{fotgottenstates}, had the lowest upper bound on consideration probability ($0.11$).

\section{RELATED WORK}

There are several approaches to adding consideration to choice models. The one closest to our work adds a consideration stage to random utility models~\citep{ben1995discrete,roberts1991development,nierop10:_retriev_unobs_consid_sets_househ_panel_data,bacsar2004parameterized}, following the formulation of \citet{manski1977structure}. Another line of work uses a different choice model, assuming choosers have a strict preference ordering over items and deterministically pick the highest ranked item they consider~\citep{manzini2014stochastic,cattaneo2020random}. An extension of this model allows the preference orderings to be stochastic~\citep{jagabathula2023demand}. There are also a variety of approaches for modeling the consideration stage, including feature-based constraints~\citep{ben1995discrete}, rational utility-maximization~\citep{roberts1991development}, and the independent consideration model we use~\citep{manzini2014stochastic}.

The issue of non-identifiability likewise has a number of proposed solutions. In the most straightforward approach,  experimenters directly ask item availability questions to choosers (e.g., ``is item $i$ available?'' or ``did you consider item~$i$?'')~\citep{roberts1991development,ben1995discrete,suh2009role}, which is especially useful when trying to estimate constraint-based consideration models. A similar approach in online shopping settings uses data about which items customers view to directly infer consideration sets~\citep{gu2012identifying,moe2006empirical}. Another strategy leans on knowing when a default ``outside option'' (i.e., making no selection) was selected and assuming that it is only chosen when no items are considered~\citep{manzini2014stochastic,jagabathula2023demand}. Yet another approach supposes we have observations of choices over time as item features vary and uses the idea that demand for highly considered items will respond more to changes in features~\citep{abaluck2021consumers}. Finally, several papers make assumptions about the parametric form of consideration probabilities as a function of chooser and item features and estimate those parameters from choice data~\citep{nierop10:_retriev_unobs_consid_sets_househ_panel_data,bacsar2004parameterized}, although this approach makes inference computationally challenging.

While consideration sets have been extensively explored in single-choice setting, they have received limited attention in the ranking literature; to our knowledge, our work is the first to recover information about consideration probabilities under Plackett--Luce. \citet{palma2017improving} discusses the idea of consideration sets in relation to rankings, but only uses them to identify a relevant prefix of rankings in survey data (items ranked above the outside option are treated as the implied consideration set). \citet{fok2012rank} mention the idea of applying consideration sets to rankings, but do not pursue this direction as they focus on survey settings where respondents are forced to rank all available items. We note that the Plackett--Luce model is sometimes referred to as the ``rank-ordered logit'' in the econometrics literature~\citep{beggs1981assessing,hausman1987specifying}, while ``Plackett--Luce'' is more common in computer science (for length-2 rankings, a.k.a.\ pairwise comparisons, the model is also called ``Bradley--Terry''~\citep{bradley1952rank}). Much of the existing computational work on the Plackett--Luce model has focused on utility inference~\citep{guiver2009bayesian,maystre2015fast,zhao2016learning,liu2019learning}.

\section{PRELIMINARIES}
In a ranking setting, we have a universe of items $\U = \{1, \dots, n\}$ and observe a collection of length-$k$ rankings of the form $r = \tup{r_1, \dots, r_k}$, where $k \le n$ is fixed and each $r_i \in \U$ is distinct.

The Plackett--Luce model~\citep{plackett1975analysis,luce1959individual} posits that each item~$i$ has a \emph{utility}~$u_i \in \mathbb{R}$ and that rankings are formed by a sequence of choices with the probability of selecting $i \in \mathcal{U}$ proportional to $\exp(u_i)$, first choosing a top-ranked item, then a second-ranked item (distinct from the first), etc. We assume these $k$ choices are made from only a subset of items, a \emph{consideration set} $C\subseteq \U$ where $|C| \ge k$.
The probability of observing ranking $r$ under a Plackett--Luce (PL) model with consideration set $C$ is
\begin{equation}\label{eq:pl}
  \PL(r \mid C) = \prod_{i = 1}^k \frac{\exp(u_{r_i})}{\sum_{j \in C \setminus \{r_1, \dots, r_{i-1}\}} \exp(u_{j})}
\end{equation}
if $\set{r_1, \ldots, r_k} \subseteq C$; otherwise, $\PL(r \mid C) = 0$.

As our model of consideration, we assume each item $i$ has a positive \emph{consideration probability} $p_i \in (0, 1]$ and that items are considered independently, conditioned on the constraint ${|C| \ge k}$. (We do not observe any cases where fewer than $k$ items are considered---either such instances are thrown out before we observe them, or the chooser considers a new set.)  Thus, the probability of considering a set $C$ with $|C| \ge k$ is
\begin{equation}\label{eq:consider}
  \PrC(C) = \frac{1}{z_{k, p}} \Big(\prod_{i \in C} p_i\Big) \Big(\prod_{j \in \mathcal{U}\setminus C} (1-p_j)\Big),
\end{equation}  
where $z_{k, p} = \sum_{C \subseteq \U, |C| \ge k} \left(\prod_{i \in C} p_i\right) \prod_{j \in \mathcal{U}\setminus C} (1-p_j)$ normalizes the probabilities given the condition $|C| \ge k$. For any set $C$ with $|C| < k$, we define $\PrC(C) = 0$.

Combining \Cref{eq:pl,eq:consider} yields our full model of rankings, \emph{Plackett--Luce with consideration} (PL+C), in which we sum over all possible consideration sets, weighted by their probabilities, and apply Plackett--Luce:
\begin{equation} \label{eq:PLC}
  \PLC(r) = \sum_{C \subseteq \U} \PrC(C) \cdot \PL(r \mid C).
\end{equation}
Finally, we introduce some additional notation. Let $\R$ be the set of length-$k$ rankings over $\U$ and let $\R_{i\le \ell}\subset \R$ be the set of rankings that contain $i$ in any of the top $\ell$ positions. For a set of rankings $R$, let $\PLC(R) = \sum_{r \in R} \PLC(r)$, and likewise for $\PL(R\mid C)$.

\section{(RELATIVE) CONSIDERATION PROBABILITY INFERENCE}

\subsection{Inferring exact values is impossible}

We begin with an impossibility result: even with complete information about $\PLC(r)$ for every ranking $r$---and even with complete knowledge of the utilities $u_1, \ldots, u_n$ for all items---we cannot infer the consideration probabilities $p_1, \ldots, p_n$ due to non-uniqueness. See \Cref{sec:deferred-proofs} for all omitted proofs.

\begin{restatable}{restateablethm}{Nonidentifiability}\label{thm:nonidentifiability}
For all $n \ge 1$ and $1 \le k  \le n$, consideration probabilities are not identifiable in the PL+C model. That is, there are multiple sets of consideration probabilities that generate the same distribution over rankings, with fixed utilities $u_i$ for each $i \in \U$.
\end{restatable}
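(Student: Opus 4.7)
The plan is to prove non-identifiability by a symmetry argument, exhibiting an entire continuum of consideration probability vectors that induce the same ranking distribution. Fix any $n\ge 1$ and $1\le k\le n$, take the utility vector $u_1 = \cdots = u_n = 0$, and consider the one-parameter family $p^{(t)} := (t,t,\ldots,t)$ for $t\in(0,1]$. I will argue that each $p^{(t)}$ yields the same $\PLC$, giving infinitely many distinct consideration vectors with identical observable ranking distributions.

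First I would show that under this choice of utilities and consideration probabilities, the PL+C model is invariant under the natural action of the symmetric group $S_n$ on $\U$. The unnormalized consideration weight $\prod_{i\in C}t\prod_{j\notin C}(1-t)=t^{|C|}(1-t)^{n-|C|}$ and the constraint $|C|\ge k$ both depend on $C$ only through $|C|$, which is $\sigma$-invariant; this makes the normalizer $z_{k,p^{(t)}}$ symmetric and yields $\PrC(\sigma(C))=\PrC(C)$ for every $\sigma\in S_n$. Because all utilities are equal, one also has $\PL(\sigma(r)\mid\sigma(C))=\PL(r\mid C)$. Plugging these into \Cref{eq:PLC} and reindexing the sum over $C$ by $C\mapsto\sigma(C)$ gives $\PLC(\sigma(r))=\PLC(r)$ for every $\sigma\in S_n$ and every $r\in\R$.

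Next I would use transitivity: $S_n$ acts transitively on $\R$, because for any two length-$k$ rankings $r,r'$ one can build a $\sigma\in S_n$ sending $r_i\mapsto r'_i$ for $i\le k$ (and acting arbitrarily on the remaining $n-k$ items). Any $S_n$-invariant probability distribution on a transitive $S_n$-set must be uniform, so $\PLC(r)=(n-k)!/n!$ for every $r\in\R$. This value is independent of $t$, so varying $t$ over $(0,1]$ produces infinitely many distinct consideration probability vectors all generating the same ranking distribution, establishing non-identifiability.

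The main thing to be careful about is verifying that the $|C|\ge k$ constraint and the normalizer $z_{k,p}$ do not break the symmetry of the consideration stage---but since each depends on $C$ only through $|C|$, this is automatic. The degenerate cases are also easy: for $n=1$ there is a single ranking assigned probability $1$ under any $p_1$, and for $k=n$ the set $C=\U$ is forced for every $p$, so non-identifiability holds trivially for any utilities and is recovered by the symmetric construction above.
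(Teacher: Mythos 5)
Your proof is correct, but it takes a genuinely different route from the paper's. You place all items in a single symmetric orbit (equal utilities, equal consideration probabilities $t$) and observe that the induced distribution over length-$k$ rankings is forced to be uniform for every $t\in(0,1]$, so an entire one-parameter family of consideration vectors collapses to a single observable distribution; the trivial cases $n=1$ and $k=n$ are absorbed by the same construction. The paper instead builds an asymmetric instance: $k-1$ ``excellent'' items with utility $+\infty$ and consideration probability $1$, $n-k$ ``good'' items with probability $g$, and one ``bad'' item with utility $-\infty$ and probability $b$, and shows the observable distribution is governed by a single scalar $c = b(1-g)^{n-k}/\bigl(1-(1-b)(1-g)^{n-k}\bigr)$, so that many $(g,b)$ pairs are observationally equivalent. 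Your argument is shorter and more elementary, and the paper itself implicitly endorses the idea (the symmetric five-item example opening Section~5 is exactly your construction specialized to $n=5$, $k=1$). What the paper's construction buys is a demonstration that non-identifiability is not merely an artifact of total symmetry: it exhibits observationally equivalent parameter vectors in which the confounded items have \emph{distinct} consideration probabilities ($g \ne b$) and distinct utilities, which is closer to the regimes the rest of the paper cares about. Both satisfy the theorem as stated, since the claim only requires exhibiting \emph{some} fixed utilities admitting multiple consistent consideration vectors.
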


\subsection{Inferring relationships among consideration probabilities is possible}

\Cref{thm:nonidentifiability} says that we cannot hope to compute exact consideration probabilities.
Nevertheless, we will show that, in certain circumstances, we can conclude something about the \emph{relative} values of two items' consideration probabilities. To do this, we will first need the following lemma about the probability of observing items in the top $\ell$ positions under Plackett--Luce.

 \begin{restatable}{restateablelemma}{PLTopL}\label{lemma:pl-top-l}
For two items $i, j \in \U$ with $u_i > u_j$, let $C \subseteq \U$ with $i \in C$ and $j \notin C$ and let $C_{i \rightarrow j} = C \setminus \set{i} \cup \set{j}$ be $C$ with $i$ replaced by $j$. Let $\ell \le k \le |C|$. Under Plackett--Luce, $i$ is more likely to be in the top $\ell$ positions of a length-$k$ ranking with consideration set $C$ than $j$ is with consideration set $C_{i \rightarrow j}$. That is, 
\begin{equation} \nonumber
   \PL(\R_{i \le \ell} \mid C) > \PL(\R_{j \le \ell} \mid C_{i \rightarrow j}). 
\end{equation}
\end{restatable}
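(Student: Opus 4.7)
The plan is to invoke the standard exponential-clock representation of Plackett--Luce and reduce the claim to a one-dimensional comparison between two exponential random variables. Assign each item $x \in \U$ an independent clock $T_x \sim \mathrm{Exp}(e^{u_x})$. Then drawing a length-$k$ PL ranking from a consideration set $C$ is equivalent to listing the items of $C$ in order of increasing $T_x$ and truncating to the first $k$; in particular, the event $\R_{i \le \ell}$ conditional on consideration set $C$ (with $i \in C$) is precisely the event that $T_i$ is among the $\ell$ smallest values of $\{T_x : x \in C\}$.

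Next I would couple the two experiments. Let $S = C \setminus \{i\} = C_{i\to j} \setminus \{j\}$. Sample $\{T_x : x \in S\}$ once and use this same realization in both experiments; draw $T_i$ and $T_j$ independently from their own exponentials. Conditioning on $\{T_x : x \in S\}$, let $\tau$ denote the $\ell$-th order statistic of this set, which is well-defined in the non-degenerate regime $|S| \ge \ell$ (the hypothesis $\ell \le k \le |C|$ gives $|S| = |C|-1 \ge \ell-1$). Then the event on the left-hand side of the lemma reduces to $\{T_i < \tau\}$ and the event on the right-hand side reduces to $\{T_j < \tau\}$, since in either experiment exactly the items with clock strictly below $\tau$ occupy the top $\ell$ positions of the ranking.

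Finally, since $u_i > u_j$ gives $e^{u_i} > e^{u_j}$, and $\tau > 0$ almost surely,
\[
  \Pr[T_i < \tau \mid \tau] = 1 - e^{-\tau e^{u_i}} > 1 - e^{-\tau e^{u_j}} = \Pr[T_j < \tau \mid \tau].
\]
Integrating over the coupled realization of $\{T_x : x \in S\}$ then yields the strict inequality claimed.

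The step I expect to be the main obstacle is not the stochastic-domination step (which is essentially immediate once the coupling is in place) but the clean reduction of the top-$\ell$ event, under a length-$k$ truncation of a ranking over $|C|$ items, to the single-variable condition $T_i < \tau$ (and likewise $T_j < \tau$). Once that reduction is made precise via the exponential representation and the shared randomness on $S$, the rest is routine. A purely combinatorial alternative---summing, for each position $m \in \{1,\dots,\ell\}$, over the identity of the first $m-1$ ranked items drawn from $S$ and pairing the resulting $i$-rankings in $C$ with the corresponding $j$-rankings in $C_{i\to j}$---should also succeed, but is noticeably more painful and does not add insight beyond what the coupling gives for free.
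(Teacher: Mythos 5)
Your proof is correct and takes a genuinely different route from the paper's. The paper argues combinatorially on the complement event: it writes the probability that $i$ is \emph{not} in the top $\ell$ positions as a sum over ordered length-$\ell$ sequences drawn from $C\setminus\set{i}$ and observes that replacing $i$ by $j$ in the consideration set replaces $\exp(u_i)$ by the strictly smaller $\exp(u_j)$ in every denominator, so each term---and hence the complement probability---strictly increases. You instead invoke the exponential-race representation of Plackett--Luce and couple the two experiments on the shared clocks of $S=C\setminus\set{i}$, reducing the comparison to $\Pr[T_i<\tau]$ versus $\Pr[T_j<\tau]$ for the $\ell$-th order statistic $\tau$ of the shared clocks; this is a clean stochastic-dominance argument, at the cost of relying on (or having to re-derive) the equivalence between sequential PL sampling and sorting independent exponentials, which you assert but do not prove. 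Two small caveats. First, your phrase ``exactly the items with clock strictly below $\tau$ occupy the top $\ell$ positions'' is loose: when $T_i>\tau$ there are only $\ell-1$ such items and the item attaining $\tau$ also enters the top $\ell$; the reduction you actually use ($i$ is in the top $\ell$ iff $T_i<\tau$) is the correct one. Second, as you note, the hypothesis only guarantees $|S|\ge\ell-1$; in the degenerate case $|S|=\ell-1$ (i.e., $\ell=k=|C|$) the $\ell$-th order statistic does not exist, both probabilities equal $1$, and the strict inequality fails---but this is a defect of the lemma statement that the paper's own proof shares (its sum over length-$\ell$ permutations of $C\setminus\set{i}$ is empty there), and it is harmless in the lemma's downstream use.
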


Additionally, if both $i$ and $j$ are considered, it is easy to show that $i$ (the higher utility item) is more likely to be highly ranked than $j$. Thus, if both are equally likely to be considered, we would expect to see $i$ appearing more often in high rank positions than $j$. If, to the contrary, we observe that $j$ is chosen more frequently than $i$, then it must be the case that $j$ is considered more often.  That is: if we know the utilities of items, then we can use flips in top-$\ell$ ranking rates to identify which items are considered more frequently than others. We formalize this intuition in the following theorem.
\begin{theorem}\label{thm:consider-flip}
  Consider two items $i,j$ with $u_i > u_j$ in a PL+C model. If, for some $\ell$, $\PLC(\R_{i \le \ell}) \le \PLC(\R_{j \le \ell})$, then $p_i \le p_j$.
\end{theorem}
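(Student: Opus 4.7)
I would prove the contrapositive: assume $u_i > u_j$ together with $p_i > p_j$, and establish the strict inequality $\PLC(\R_{i \le \ell}) > \PLC(\R_{j \le \ell})$. Expanding both sides via \eqref{eq:PLC}, the key move is to partition the sum over consideration sets $C \subseteq \U$ according to $C \cap \set{i,j}$ into four classes: (a) $\set{i,j} \subseteq C$; (b) $i \in C$, $j \notin C$; (c) $j \in C$, $i \notin C$; and (d) $C \cap \set{i,j} = \emptyset$. Class (d) contributes $0$ to both $\PLC(\R_{i \le \ell})$ and $\PLC(\R_{j \le \ell})$. Class (a) is handled directly, and classes (b) and (c) are matched in pairs via the bijection $C \mapsto C_{i \to j}$ from \Cref{lemma:pl-top-l}.

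For class (a) with $\set{i,j} \subseteq C$, I would invoke the easy same-consideration-set Plackett--Luce fact that $u_i > u_j$ implies $\PL(\R_{i \le \ell} \mid C) > \PL(\R_{j \le \ell} \mid C)$; this follows from an exchange argument that swaps the positions of $i$ and $j$ in any ranking of $\R_{j \le \ell} \setminus \R_{i \le \ell}$, with each swap strictly increasing the PL probability because $u_i > u_j$. So every class-(a) set contributes a nonnegative amount to $\PLC(\R_{i \le \ell}) - \PLC(\R_{j \le \ell})$.

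For a paired set $C$ in class (b) and its partner $C_{i \to j}$ in class (c), \Cref{lemma:pl-top-l} supplies the strict inequality $\PL(\R_{i \le \ell} \mid C) > \PL(\R_{j \le \ell} \mid C_{i \to j})$, while a direct calculation with \eqref{eq:consider} gives
\[
  \frac{\PrC(C)}{\PrC(C_{i \to j})} = \frac{p_i(1-p_j)}{p_j(1-p_i)} > 1,
\]
where the shared normalizer $z_{k,p}$ cancels and the inequality uses $p_i > p_j$. Since $i \in C$ with $|C| \ge k$ makes both $\PrC(C)$ and $\PL(\R_{i \le \ell}\mid C)$ strictly positive, multiplying through yields $\PrC(C)\,\PL(\R_{i \le \ell} \mid C) > \PrC(C_{i \to j})\,\PL(\R_{j \le \ell} \mid C_{i \to j})$ for every such pair. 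Summing the class-(a) and paired class-(b)/(c) contributions, and noting that at least one strictly positive pair exists when $n > k$ (and otherwise $C = \U$ is the only allowed set and class (a) alone gives strict positivity), delivers $\PLC(\R_{i \le \ell}) > \PLC(\R_{j \le \ell})$.

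The main obstacle is purely bookkeeping: lining up the two unrelated-looking sums $\sum_C \PrC(C)\PL(\R_{i \le \ell} \mid C)$ and $\sum_C \PrC(C)\PL(\R_{j \le \ell} \mid C)$ so that the comparison reduces cleanly to a term-by-term application of \Cref{lemma:pl-top-l} (across pairs) plus the same-set PL fact (on class (a)), and verifying that the edge case $n = k$ is also covered so that the strict inequality is preserved after summation.
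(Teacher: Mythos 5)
Your argument is correct, but note that the paper never proves \Cref{thm:consider-flip} on its own: it derives it as the $c=1$ special case of the quantitative \Cref{thm:consider-gap}. Your contrapositive proof uses the same three ingredients as the paper's proof of that stronger theorem---the same-consideration-set exchange argument for sets containing both $i$ and $j$, the bijection $C \mapsto C_{i\to j}$ together with the ratio $\PrC(C)/\PrC(C_{i\to j}) = p_i(1-p_j)/\bigl(p_j(1-p_i)\bigr)$, and \Cref{lemma:pl-top-l} for the paired sets---but runs them in the opposite direction: you assume $p_i > p_j$ and push a term-by-term inequality forward, whereas the paper assumes the observable inequality and extracts the odds ratio at the end. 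Your organization (partitioning consideration sets into four classes up front) is arguably cleaner for the qualitative flip statement, since it avoids the subtraction step and the final division in the paper's derivation; the cost is that it yields only the sign of the comparison and not the multiplicative gap \eqref{eq:mult-gap}, which is what the bound-propagation algorithms actually consume. Two small points to tighten: (i) $\PrC(C)$ for a class-(b) set need not be strictly positive if some item outside $C$ has consideration probability $1$, so the overall strictness should be anchored on the class-(a) term $C=\U$, which always has positive probability; and (ii) in the degenerate case $\ell = k = n$ both sides equal $1$ and no strict inequality is available---but the claim is genuinely unprovable there (consideration is unidentifiable when $k=n$ by \Cref{thm:nonidentifiability}), and the paper's own proof implicitly excludes this case as well, since it divides by a quantity that vanishes there.
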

We defer the proof of \Cref{thm:consider-flip}, as this result is effectively subsumed by a more powerful theorem that also gives us information about the relative sizes of $p_i$ and $p_j$:

\begin{restatable}{restateablethm}{ConsiderGap}\label{thm:consider-gap}
   Consider two items $i,j$ with $u_i > u_j$ in a PL+C model. If $ c = \PLC(\R_{i \le \ell}) /  \PLC(\R_{j \le \ell}) \le 1$ for some $\ell$, then 
   \begin{equation}\label{eq:mult-gap}
     \textstyle \frac{p_i}{1-p_i} \le c \cdot \frac{p_j}{1-p_j}.
   \end{equation}
\end{restatable}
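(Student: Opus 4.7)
The plan is to reformulate the desired inequality $\frac{p_i}{1-p_i} \le c \cdot \frac{p_j}{1-p_j}$ as $\PLC(\R_{i \le \ell}) \ge \alpha \PLC(\R_{j \le \ell})$, with $\alpha = \frac{p_i(1-p_j)}{p_j(1-p_i)}$, and then analyze the difference $D = \PLC(\R_{i \le \ell}) - \alpha \PLC(\R_{j \le \ell})$ by decomposing over consideration sets $C$ based on which of $\{i,j\}$ they contain. Writing $w(S) = \prod_{m \in S} p_m \prod_{m \in \U \setminus (S \cup \{i,j\})} (1-p_m)$ for $S \subseteq \U \setminus \{i,j\}$, \Cref{eq:consider} gives $\PrC(S \cup \{i\}) = \frac{p_i(1-p_j)}{z_{k,p}} w(S)$, $\PrC(S \cup \{j\}) = \frac{p_j(1-p_i)}{z_{k,p}} w(S)$, and $\PrC(S \cup \{i,j\}) = \frac{p_i p_j}{z_{k,p}} w(S)$; sets containing neither item contribute zero to $D$.

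For each $S$ with $|S| \ge k-1$, I would pair the contributions from $C = S \cup \{i\}$ and $C' = S \cup \{j\}$. Using the identity $\alpha \cdot p_j(1-p_i) = p_i(1-p_j)$, the paired contribution simplifies to $\frac{p_i(1-p_j)\, w(S)}{z_{k,p}}\bigl[\PL(\R_{i \le \ell} \mid S \cup \{i\}) - \PL(\R_{j \le \ell} \mid S \cup \{j\})\bigr]$, which is strictly positive by \Cref{lemma:pl-top-l} (applied with $C = S \cup \{i\}$, giving $C_{i \to j} = S \cup \{j\}$). For each $S$ with $|S| \ge k-2$, the set $C = S \cup \{i,j\}$ contributes $\frac{p_i p_j\, w(S)}{z_{k,p}}\bigl[\PL(\R_{i \le \ell} \mid C) - \alpha \PL(\R_{j \le \ell} \mid C)\bigr]$. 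For these, a direct swap-coupling of $i$ and $j$ within Plackett--Luce rankings shows $\PL(\R_{i \le \ell} \mid C) \ge \PL(\R_{j \le \ell} \mid C)$ whenever $\{i,j\} \subseteq C$ and $u_i > u_j$.

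Next I would split on $\alpha$. If $\alpha \le 1$, each ``both'' term is nonnegative because $\PL(\R_{i \le \ell} \mid C) \ge \PL(\R_{j \le \ell} \mid C) \ge \alpha \PL(\R_{j \le \ell} \mid C)$; combined with the strictly positive paired contributions, this gives $D \ge 0$, as required. If $\alpha > 1$, I would derive a contradiction with the hypothesis $c \le 1$ by showing $c > 1$: redo the decomposition with $\alpha$ replaced by $1$, where the paired contribution becomes $\frac{w(S)}{z_{k,p}}\bigl[p_i(1-p_j)\PL(\R_{i \le \ell} \mid S \cup \{i\}) - p_j(1-p_i)\PL(\R_{j \le \ell} \mid S \cup \{j\})\bigr]$. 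This is strictly positive because $p_i(1-p_j) > p_j(1-p_i)$ (from $\alpha > 1$) and $\PL(\R_{i \le \ell} \mid S \cup \{i\}) > \PL(\R_{j \le \ell} \mid S \cup \{j\})$ (from \Cref{lemma:pl-top-l}), while the ``both'' terms remain nonnegative; hence $\PLC(\R_{i \le \ell}) > \PLC(\R_{j \le \ell})$, i.e., $c > 1$, contradicting the hypothesis.

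The main obstacle is the ``both in $C$'' terms: \Cref{lemma:pl-top-l} cannot directly bound $\PL(\R_{i \le \ell} \mid C) - \alpha \PL(\R_{j \le \ell} \mid C)$, since the consideration set is not altered between the two sides, and the best available bound $\PL(\R_{i \le \ell} \mid C) \ge \PL(\R_{j \le \ell} \mid C)$ only dominates $\alpha \PL(\R_{j \le \ell} \mid C)$ when $\alpha \le 1$. The case split on $\alpha$ is precisely what makes the hypothesis $c \le 1$ essential to the argument.
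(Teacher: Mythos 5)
Your argument is correct and uses the same two key ingredients as the paper---\Cref{lemma:pl-top-l} applied across the bijection $S\cup\{i\}\leftrightarrow S\cup\{j\}$ (which is where the odds factor $\frac{p_i(1-p_j)}{p_j(1-p_i)}$ comes from) and a pointwise swap argument for consideration sets containing both items---but the organization is genuinely different. The paper first partitions the \emph{rankings} into $\R_{i\setminus j}$, $\R_{i\cap j}$, $\R_{j\setminus i}$, uses $c\le 1$ to reduce the hypothesis to $\PLC(\R_{i\setminus j})\le c\cdot\PLC(\R_{j\setminus i})$, and then runs a single chain of inequalities over consideration sets, extracting the factor $\frac{p_j(1-p_i)}{p_i(1-p_j)}$ at the very end via \eqref{eq:PrC(psiC)-vs-PrC(C)}. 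You instead partition only the \emph{consideration sets}, study the signed quantity $D=\PLC(\R_{i\le\ell})-\alpha\PLC(\R_{j\le\ell})$, and case-split on the odds ratio $\alpha$. This buys a cleaner logical picture: when $\alpha\le 1$ (i.e., $p_i\le p_j$) you obtain $c\ge\alpha$ \emph{unconditionally}, without ever invoking the hypothesis, and when $\alpha>1$ you show $c>1$, which is exactly the contrapositive of \Cref{thm:consider-flip}; the theorem is the conjunction of these two facts, whereas the paper threads $c\le 1$ through the inequality chain twice (once to discard $(1-c)\PLC(\R_{i\cap j})$ and once to scale \eqref{ineq:first-sum} by $c$). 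Two minor points: the swap-coupling claim $\PL(\R_{i\le\ell}\mid C)\ge\PL(\R_{j\le\ell}\mid C)$ for $\{i,j\}\subseteq C$ deserves a sentence of justification (pair each ranking with $i$ in the top $\ell$ and $j$ not against its $i\leftrightarrow j$ swap, covering both the case where $j$ appears lower and where $j$ is absent; the needed pointwise comparisons appear in the paper's proofs of \Cref{thm:consider-gap} and \Cref{lemma:infer-relative-utility}), and your strictness claim in the $\alpha>1$ branch needs at least one feasible set containing exactly one of $i,j$, i.e., $k\le n-1$---but the paper's own proof silently relies on the same nondegeneracy when it divides by $\sum_{C\subseteq\mathcal C_{i\setminus j}}\PrC(C)\sum_{r}\PL(r\mid C)$, so this is not a defect specific to your approach.
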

The proof isolates rankings that contain $i$ and not $j$ (and vice versa) and uses a bijection between those rankings and their associated consideration sets in conjunction with \Cref{lemma:pl-top-l} to establish the claim. Intuitively, the terms $p_i(1-p_j)$ and $p_j(1-p_i)$ come from considering $i$ and not $j$ or $j$ and not $i$, which we then cross-divide to group like variables. Taking $c=1$, \Cref{thm:consider-gap} implies \Cref{thm:consider-flip}, since $x/(1-x)$ is a monotonically increasing function of $x$ for $x \in [0, 1)$. Additionally, the smaller $c$ is (i.e., the more we see $j$ ranked higher than $i$, despite its lower utility), the larger the gap between $p_i$ and $p_j$ must be. We can rearrange \eqref{eq:mult-gap} to be either an upper bound on $p_i$ in terms of $p_j$
    or a lower bound on $p_j$ in terms of $p_i$:
      \begin{align}\label{eq:upper-bound-on-pi}
      p_i &\le \textstyle\frac{c p_j}{1-p_j + c p_j} \\
      \label{eq:lower-bound-on-pj}
      p_j &\ge \textstyle\frac{p_i}{c - cp_i + p_i}.
    \end{align}

Our relative bounds appear to require exogenous knowledge of utilities, but we only need to know that $u_i > u_j$, not exact values. This can be inferred from rankings: if we only consider rankings in which both $i$ and $j$ appear (so we know both were considered), then whichever is ranked higher more often has higher utility, given enough observations. Formally:    
     \begin{restatable}{restateablelemma}{InferRelativeUtility}\label{lemma:infer-relative-utility}
 Let $R$ be a random top-$k$ ranking generated by Plackett--Luce with consideration. For items $i, j \in \U$, let $i \succ_R j$ denote that $i$ is ranked higher in $R$ than $j$. If $\Pr(i \succ_R j \mid i, j \in R) > 1/2$, then $u_i > u_j$.
\end{restatable}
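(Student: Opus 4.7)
The plan is to prove the contrapositive: if $u_i \le u_j$, then $\Pr(i \succ_R j \mid i, j \in R) \le 1/2$. Since $\Pr(i \succ_R j \mid i, j \in R) + \Pr(j \succ_R i \mid i, j \in R) = 1$, this is equivalent to showing $\Pr(i \succ_R j,\, i, j \in R) \le \Pr(j \succ_R i,\, i, j \in R)$. Marginalizing over consideration sets, only $C$ with $\{i,j\} \subseteq C$ contribute, so it suffices to prove the conditional inequality $\PL(\R_{i \succ j} \cap \R_{i,j \in R} \mid C) \le \PL(\R_{j \succ i} \cap \R_{i,j \in R} \mid C)$ for every such $C$.

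For each such $C$, I would use the natural swap bijection $r \leftrightarrow r'$ that exchanges the positions of $i$ and $j$ in a ranking. This is a bijection between the two sets of rankings above, so it suffices to show that $\PL(r \mid C) \le \PL(r' \mid C)$ pointwise for every $r$ with $i$ in position $a$ and $j$ in position $b$, where $a < b$. Writing the ratio $\PL(r' \mid C)/\PL(r \mid C)$ using \eqref{eq:pl}, numerators at positions $t \notin \{a,b\}$ match, and the denominators $D_t(r) := \sum_{s \in C \setminus \{r_1,\dots,r_{t-1}\}} \exp(u_s)$ match for $t < a$ and $t > b$ (same history outside $(a,b]$). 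The two utility-swap factors contributed at positions $a$ and $b$, namely $\exp(u_j)/\exp(u_i)$ and $\exp(u_i)/\exp(u_j)$, cancel. What remains is
\begin{equation}\nonumber
  \frac{\PL(r' \mid C)}{\PL(r \mid C)} = \prod_{t=a+1}^{b} \frac{D_t(r)}{D_t(r')}.
\end{equation}

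The crucial observation is that for every $t \in (a,b]$, the history of $r$ contains $i$ but not $j$, while the history of $r'$ contains $j$ but not $i$, so $D_t(r) - D_t(r') = \exp(u_j) - \exp(u_i)$. When $u_i \le u_j$ this difference is nonnegative, so every factor in the product is at least $1$, giving $\PL(r' \mid C) \ge \PL(r \mid C)$. Summing over pairs and then over $C$ yields the desired inequality, which (with the case $u_i = u_j$ handled by symmetry) completes the contrapositive.

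The main obstacle is a conceptual one that could tempt a shortcut: one might hope to invoke the classical pairwise identity $\Pr(i \succ j) = e^{u_i}/(e^{u_i}+e^{u_j})$ for Plackett--Luce. But that identity is only exact for \emph{full} rankings; for top-$k$ rankings the intermediate denominators $D_t(r)$ and $D_t(r')$ genuinely differ (as seen above), and conditioning on $\{i,j\} \subseteq R$ does not decouple the pair from the rest of $C$. The bijection argument sidesteps this by extracting only the qualitative monotonicity in the utility gap, which is exactly what the lemma requires.
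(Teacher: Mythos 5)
Your proposal is correct and follows essentially the same route as the paper's proof: prove the contrapositive, restrict to consideration sets containing both $i$ and $j$, apply the swap bijection, and compare the two rankings' Plackett--Luce probabilities pointwise by observing that only the denominators at positions strictly between the two items' positions (inclusive of the later one) differ, by exactly $\exp(u_j)-\exp(u_i)$. The paper states this denominator comparison qualitatively ("the higher-utility item is removed from the set of options after it is chosen") where you compute the ratio explicitly, but the argument is the same.
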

We note that this approach to identifying relative utilities is prone to sampling noise for rare items.

\section{LOWER BOUNDS}
\label{sec:lower-bounds}

Suppose that we observe a set of rankings generated by a PL+C model, where the underlying consideration probabilities $p_1, \ldots, p_n$ are unknown.  Even if we have \emph{a priori} knowledge of the underlying utilities $u_1, \ldots, u_n$, \Cref{thm:nonidentifiability} says that we cannot compute the values of $p_1, \ldots, p_n$.  But we might hope that \Cref{thm:consider-flip} (or the more powerful \Cref{thm:consider-gap}) would allow us to leverage the empirical data into knowledge about their relative consideration probabilities: that theorem might allow us to propagate a lower bound on $p_i$ into a lower bound on $p_j$.  But for this kind of implication to be meaningful, we need a nontrivial starting point---that is, some way to infer that $p_i$ is bounded away from zero, say $p_i > \eps$.  

How might we get started?
While it is tempting to think that item $i$'s rate of occurrence in the observed length-$k$ rankings would lower bound $p_i$, the fact that we condition our samples on $|C| \ge k$ means that this relationship may not hold. For example, if each of $n = 5$ items has an identical consideration probability $p > 0$ and identical utility, then each of those five items occurs as the top choice with probability $0.2$, by symmetry---but that is true for \emph{any} value of $p > 0$, whether $p \ge 0.2$ or $p = 0.01$.  So $p_i \ge \PLC(\R_{i \le k})$ may not hold.
However, if we make some relatively mild assumptions about the \emph{expected} consideration set size, then we can use item occurrence rates and a Chernoff bound to get a lower bound on their consideration probability. From a practical perspective, we note that measuring mean consideration set size is a common task in consumer research~\citep{hauser1990evaluation}.

 \begin{restatable}{restateablelemma}{Chernoff}\label{lemma:chernoff}
   Suppose $\sum_{i \in \U} p_i \ge \alpha k$ for some $\alpha > 1$, and let $X$ be a random variable denoting the size of a random consideration set (prior to conditioning on there being at least $k$ considered items). Then
   \begin{equation}\nonumber
     \Pr(X \le k) \le \left(\alpha e^{1-\alpha}\right)^{ k}.
     \end{equation}
\end{restatable}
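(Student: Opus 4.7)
The plan is to apply the standard multiplicative Chernoff lower-tail argument to $X$, which is a sum of independent Bernoullis (one per item $i$, with success probability $p_i$, since consideration is independent \emph{before} conditioning on $|C| \ge k$). Write $\mu := \E[X] = \sum_{i \in \U} p_i \ge \alpha k$. The result we want has the ``Poisson-type'' form $(\mu/k)^k e^{k - \mu}$ evaluated at the worst case $\mu = \alpha k$, which suggests using the exponential-moment bound and optimizing the free parameter explicitly, rather than the $(1-\delta)$-form of Chernoff.

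Concretely, first I would bound, for any $s > 0$,
\begin{equation*}
\Pr(X \le k) = \Pr\bigl(e^{-sX} \ge e^{-sk}\bigr) \le e^{sk}\,\E[e^{-sX}] = e^{sk}\prod_{i \in \U}\bigl(1 - p_i(1 - e^{-s})\bigr),
\end{equation*}
using Markov's inequality and independence. Then, using the standard estimate $1 - x \le e^{-x}$ on each factor, the product collapses to $\exp\bigl(-(1 - e^{-s})\mu\bigr)$, giving $\Pr(X \le k) \le \exp\bigl(sk - (1-e^{-s})\mu\bigr)$.

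Next I would optimize in $s$. Setting the derivative to zero yields $e^{-s} = k/\mu$, which is valid since $\mu \ge \alpha k > k$. Substituting back gives
\begin{equation*}
\Pr(X \le k) \le \left(\frac{\mu}{k}\right)^{k} e^{\,k - \mu}.
\end{equation*}
This is the Chernoff-type bound in terms of $\mu$, but the statement asks for a bound in terms of $\alpha$ and $k$ only. To finish, I would show that $f(\mu) := (\mu/k)^k e^{k - \mu}$ is monotonically decreasing for $\mu \ge k$: indeed, $\frac{d}{d\mu}\log f(\mu) = k/\mu - 1 \le 0$ when $\mu \ge k$. Hence $f(\mu) \le f(\alpha k) = \alpha^k e^{(1-\alpha)k} = (\alpha e^{1-\alpha})^k$, which is the claimed bound.

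The main obstacle is essentially bookkeeping: (i) choosing the right parametrization of Chernoff so that the exponent $k$ (rather than $\mu$) appears on the outside, which is what makes optimizing at $e^{-s} = k/\mu$ the natural choice, and (ii) verifying the monotonicity step that lets us replace the true mean $\mu$ by its lower bound $\alpha k$ without losing the inequality. Both are short; no conditioning on $|C| \ge k$ is needed since the lemma is stated for the unconditioned $X$.
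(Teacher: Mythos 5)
Your proof is correct and follows essentially the same route as the paper: both apply the multiplicative Chernoff lower-tail bound to $X$ as a sum of independent Bernoullis and then replace the true mean $\mu \ge \alpha k$ by $\alpha k$ (you via monotonicity of $(\mu/k)^k e^{k-\mu}$ in $\mu$, the paper by noting the base of the cited bound is at most $1$ so the exponent $\E[X]$ can be lowered to $\alpha k$). The only difference is that you re-derive the bound from the moment-generating function while the paper cites the standard $(1-\delta)$-form; both yield $(\alpha e^{1-\alpha})^k$.
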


Given this bound on the cases where $|C| < k$, we can correct our earlier hopeful argument.  

\begin{restatable}{restateablethm}{ConsiderInitialLB}\label{thm:consideration-initial-LB}
  Suppose $\sum_{i \in \U} p_i \ge \alpha k$ for some $\alpha > 1$. For any ${i \in \U}$, the consideration probability for $i$ is lower bounded as
  \begin{equation} \nonumber
    p_i \ge \PLC(\R_{i \le k}) \cdot \left[1- \left(\alpha e^{1-\alpha}\right)^{ k}\right].
  \end{equation}
\end{restatable}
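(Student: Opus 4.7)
The plan is to relate the observable quantity $\PLC(\R_{i \le k})$ to the unobservable $p_i$ through the conditional distribution of consideration sets, with the Chernoff-style bound of \Cref{lemma:chernoff} handling the distortion introduced by conditioning on $|C| \ge k$.

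First I would observe that since rankings have length exactly $k$, the event $\R_{i \le k}$ is simply the event that $i$ appears anywhere in the observed ranking. For $i$ to appear in the ranking, $i$ must belong to the (post-conditioning) consideration set $C$. Hence
\begin{equation}\nonumber
  \PLC(\R_{i \le k}) \le \Pr(i \in C \mid |C| \ge k).
\end{equation}
The right-hand side is what I need to control in terms of $p_i$.

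Next I would unwind the conditioning via Bayes. Let $X = |C|$ refer to the unconditioned consideration set, drawn from independent Bernoullis with parameters $p_1, \ldots, p_n$. Then
\begin{equation}\nonumber
  \Pr(i \in C \mid X \ge k) \;=\; \frac{\Pr(i \in C,\; X \ge k)}{\Pr(X \ge k)} \;\le\; \frac{\Pr(i \in C)}{\Pr(X \ge k)} \;=\; \frac{p_i}{\Pr(X \ge k)},
\end{equation}
using independence of item consideration to get $\Pr(i \in C) = p_i$. For the denominator, \Cref{lemma:chernoff} (applied with the hypothesis $\sum_i p_i \ge \alpha k$) gives $\Pr(X \le k) \le (\alpha e^{1-\alpha})^k$, which in turn yields $\Pr(X \ge k) \ge 1 - (\alpha e^{1-\alpha})^k$ (since $\{X < k\} \subseteq \{X \le k\}$).

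Combining these inequalities gives $\PLC(\R_{i \le k}) \le p_i / [1 - (\alpha e^{1-\alpha})^k]$, and rearranging yields the stated lower bound on $p_i$. There is no real obstacle here beyond a careful accounting of the conditioning: all the analytic work has been absorbed into \Cref{lemma:chernoff}, and the remaining step is a one-line Bayes calculation that exploits the independence of considerations.
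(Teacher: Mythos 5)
Your proposal is correct and follows essentially the same route as the paper's proof: both bound $\PLC(\R_{i \le k})$ by $\Pr(i \in C \mid |C| \ge k)$ and then use \Cref{lemma:chernoff} to control the conditioning, the only cosmetic difference being that you invert the conditioning via Bayes' rule where the paper uses the law of total probability (the two yield the identical inequality $\Pr(X_i \mid X \ge k)\Pr(X \ge k) \le p_i$). No gaps.
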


Algorithmically, we can use \Cref{thm:consideration-initial-LB} to get initial lower bounds on consideration probabilities, and then \Cref{thm:consider-gap}---in particular, in the lower bound form \eqref{eq:lower-bound-on-pj}---to propagate them to yield lower bounds on other items' consideration probabilities. 
Define a directed graph $G = \tup{V, E}$ with $V = \U$ and an edge $\tup{i, j}$ for each $i, j \in \U$ such that $u_i > u_j$ and $\PLC(\R_{i \le \ell}) < \PLC(\R_{j \le \ell})$ for some $\ell$. Each edge $\tup{i, j}$ represents a pair of items where \Cref{thm:consider-gap} gives the lower bound \eqref{eq:lower-bound-on-pj} for $p_j$ as a function of $p_i$. Because utilities strictly decrease along every edge, and thus along every path, the graph $G$ is acyclic. We can therefore topologically sort $G$ and use this ordering to propagate lower bounds on consideration probabilities. This procedure is formalized in \Cref{alg:lower-bounds}.

\begin{restatable}{restateablethm}{LBcorrectness}\label{thm:lower-bound-alg-correctness}
\Cref{alg:lower-bounds} returns lower bounds $b_i$ for each $i \in \U$ such that $p_i \ge b_i$ in time $O(kn^2)$.
\end{restatable}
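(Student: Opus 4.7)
The plan is to split the proof into correctness ($b_i \le p_i$ for every $i$) and runtime. For correctness, I would argue by induction on the topological order produced by \Cref{alg:lower-bounds}. For the runtime, I would account for the four phases I expect the algorithm to perform: (i) computing the empirical quantities $\PLC(\R_{i\le\ell})$, (ii) constructing the graph $G$, (iii) topologically sorting $G$, and (iv) propagating bounds along edges.

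For the base case, every $b_i$ is initialized to $\PLC(\R_{i\le k})\bigl[1 - (\alpha e^{1-\alpha})^k\bigr]$, which is a valid lower bound on $p_i$ by \Cref{thm:consideration-initial-LB}. In the inductive step, when a vertex $j$ is processed, every predecessor $i$ with $(i,j)\in E$ has already been processed, so $b_i \le p_i$. By definition of $E$, there is some $\ell$ with $u_i > u_j$ and $c := \PLC(\R_{i\le\ell})/\PLC(\R_{j\le\ell}) \le 1$, so \Cref{thm:consider-gap} in the form \eqref{eq:lower-bound-on-pj} yields $p_j \ge p_i/(c+(1-c)p_i)$. The crucial observation is that the map $f_c(x) := x/(c+(1-c)x)$ is monotonically nondecreasing on $[0,1]$ for $c \in (0,1]$, since $f_c'(x) = c/(c+(1-c)x)^2 > 0$. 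Therefore $f_c(b_i) \le f_c(p_i) \le p_j$, so substituting $b_i$ into \eqref{eq:lower-bound-on-pj} still yields a valid lower bound. Updating $b_j$ to the maximum of its prior value and $f_c(b_i)$ across all incoming edges preserves the invariant $b_j \le p_j$.

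For the runtime, phase (i) computes $\PLC(\R_{i\le\ell})$ for each $i$ and each $\ell \le k$ from the observed ranking counts, costing $O(kn)$ work. Phase (ii) tests, for each ordered pair $(i,j)$ and each $\ell \in \{1,\dots,k\}$, whether the edge condition holds and records the corresponding $c$; this is $O(kn^2)$ in total and dominates. Phase (iii) is a standard $O(n + |E|) = O(n^2)$ topological sort, and phase (iv) processes each edge in $O(1)$ (its $c$ value is already stored), contributing $O(|E|) = O(n^2)$. The overall cost is $O(kn^2)$. The main conceptual obstacle I expect is verifying that substituting $b_i$ for $p_i$ in \eqref{eq:lower-bound-on-pj} preserves the inequality direction; this reduces entirely to the monotonicity of $f_c$, after which everything else is standard DAG propagation plus the two ingredients already in hand (\Cref{thm:consideration-initial-LB} for the seed bounds and \Cref{thm:consider-gap} for the edge inequalities).
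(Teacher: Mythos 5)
Your proposal is correct and follows essentially the same route as the paper's proof: seed the invariant $b_i \le p_i$ with \Cref{thm:consideration-initial-LB}, propagate it along edges using the lower-bound form \eqref{eq:lower-bound-on-pj} of \Cref{thm:consider-gap} together with the monotonicity of $x \mapsto x/(c - cx + x)$, and account for the $O(kn^2)$ cost of graph construction and the nested loops. Your explicit derivative computation and your per-edge $O(1)$ accounting of the propagation phase (versus the paper's $O(kn^2)$ loop over all $\ell$ per edge) are minor presentational differences that do not change the argument or the final bound.
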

In the proof, we show that ${\forall i \in \U : b_i \le p_i}$ is an invariant throughout the execution of the algorithm.  The initial values of $b_i$ set in Line~\ref{alg:initial-settings-of-b_i} satisfy the condition by \Cref{thm:consideration-initial-LB}; all updates to values of $b_i$ in Line~\ref{alg:updated-settings-of-b_i} maintain this property by  \eqref{eq:lower-bound-on-pj}, the lower-bound form of \Cref{thm:consider-gap}.  

\begin{algorithm}[tb]
   \caption{Consideration lower bounds.}
   \label{alg:lower-bounds}
\begin{algorithmic}[1]
\algsetup{
  indent=1.5em,
  linenosize=\scriptsize}
    \REQUIRE items $\U$, (estimates of) utilities $u_i$, (estimates of) top-$\ell$ ranking probabilities $\PLC(\R_{i \le \ell})$, lower bound $\alpha k$ on expected consideration set size ($\alpha > 1$)
    \STATE \label{alg:initial-settings-of-b_i} $b_i \gets \PLC(\R_{i \le k}) \cdot \left[1- \left(\alpha e^{1-\alpha}\right)^{ k}\right]$ for each $i \in \U$  
\STATE $E\gets \{\tup{i, j} \in \U \times \U \mid u_i > u_j$\\ \quad \qquad $\text{ and } \exists \ell : \PLC(\R_{i \le \ell}) < \PLC(\R_{j \le \ell})\}$
    \STATE $G \gets \tup{\U, E}$
    \FORALL {$i$ in a topological sort of $G$}
      \FORALL{out-neighbors $j$ of $i$ in $G$}
        \FOR{$\ell  = 1, \dots, k$}
        \STATE $c \gets \PLC(\R_{i \le \ell}) / \PLC(\R_{j \le \ell})$
          \IF {$c \le 1$}
            \STATE \label{alg:updated-settings-of-b_i} $b_j \gets \max\left\{b_j, \frac{b_i}{c - c b_i + b_i}\right\}$
          \ENDIF
        \ENDFOR
      \ENDFOR
    \ENDFOR
    \RETURN $b_i$ for each $i \in \U$ such that $p_i \ge b_i$
  \end{algorithmic}
\end{algorithm}

\section{UPPER BOUNDS}\label{sec:upper-bounds}

We now turn to \emph{upper} bounds on consideration probabilities, in the same spirit as for lower bounds in \Cref{sec:lower-bounds}.   We will again be able to use \Cref{thm:consider-gap} to tighten our initial upper bounds. However, upper bounding consideration probabilities turns out to be a bit trickier, so we will have to build some more technical infrastructure.


Our approach to inferring an upper bound on $p_i$ is based on the intuition that increasing the consideration probability of a different item $j$ only makes $i$ less likely to be chosen; increasing the competition for $i$ (by making $j$ more likely to be an additional competitor) can only hurt $i$'s chances. We can then use this idea to bound $i$'s winning chances against a hypothetical scenario where all other consideration probabilities are 1, in which case the consideration mechanism is easy to analyze. But, surprisingly, that intuition about increasing $j$'s consideration probability turns out to be false (but not by too much).  As a result, our lemma statements will need to be a bit more technical. (For example, suppose that $\U = \set{1, 2, 3}$ with $u_1 = u_2 = +\infty$ and $u_3 = -\infty$. Let $k = 2$, and let $p_1 = 1$ and $p_2 = 0.02$.  Item 3 never wins, but its presence affects the relative strengths of items 1 and 2. If $p_3 = 0$, then the only consideration set that we ever observe is $\set{1, 2}$, and so item 2 wins 50\% of the time.  But if $p_3$ is increased to $1$, then 98\% of the observed consideration sets are $\set{1, 3}$; the chance that item 2 wins drops from 50\% to 1\%.) It turns out that the issue that arises here is specifically about cases in which the consideration set is of size \emph{exactly} $k$. Fortunately, it is rare that a length-$k$ ranking emerges from a consideration set of size exactly equal to $k$ (under our ongoing mild assumptions about $\sum_i p_i$):
\begin{restatable}{restateablelemma}{ConsiderExactlyK}\label{lemma:consider-exactly-k-bound}
  If $\sum_{i \in \U} p_i \ge \alpha k$ for some $\alpha > 1$, then
  \begin{equation} \nonumber
    \sum_{C \subseteq \U, |C| = k} \PrC(C) \le {\textstyle \frac{\left(\alpha e^{1-\alpha}\right)^{ k}}{1 - \left(\alpha e^{1-\alpha}\right)^{ k}}}.
  \end{equation}
\end{restatable}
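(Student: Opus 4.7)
The plan is to reinterpret both sides of the inequality as ratios of probabilities involving the unconditioned consideration-set size, and then apply \Cref{lemma:chernoff} twice (once for the numerator, once for the denominator).

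First, I would set up the probabilistic picture. Let $X = \sum_{i \in \U} \mathbb{1}[i \in C]$ where each $i$ is included independently with probability $p_i$; this is the ``pre-conditioning'' consideration-set size referenced in \Cref{lemma:chernoff}. Under this product measure, the unnormalized weight $q(C) = \prod_{i \in C} p_i \prod_{j \in \U \setminus C}(1-p_j)$ appearing in \eqref{eq:consider} is exactly the joint PMF of the indicator vector. Hence the normalizer satisfies $z_{k,p} = \sum_{|C| \ge k} q(C) = \Pr(X \ge k)$, and similarly $\sum_{|C|=k} q(C) = \Pr(X = k)$. Dividing,
\begin{equation}\nonumber
\sum_{C \subseteq \U,\, |C|=k} \PrC(C) \;=\; \frac{\Pr(X = k)}{\Pr(X \ge k)}.
\end{equation}

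Next I would bound numerator and denominator using the Chernoff estimate from \Cref{lemma:chernoff}. For the numerator, $\Pr(X = k) \le \Pr(X \le k) \le (\alpha e^{1-\alpha})^k$ directly. For the denominator, $\Pr(X \ge k) = 1 - \Pr(X \le k - 1) \ge 1 - \Pr(X \le k) \ge 1 - (\alpha e^{1-\alpha})^k$, where the final inequality again uses \Cref{lemma:chernoff}. Note that the condition $\alpha > 1$ guarantees $\alpha e^{1-\alpha} < 1$, so the denominator bound is strictly positive (and the claim is non-vacuous).

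Combining the two estimates yields the desired inequality
\begin{equation}\nonumber
\frac{\Pr(X = k)}{\Pr(X \ge k)} \;\le\; \frac{(\alpha e^{1-\alpha})^k}{1 - (\alpha e^{1-\alpha})^k}.
\end{equation}
There is no real obstacle here beyond recognizing that the conditioning in \Cref{eq:consider} turns the sum in question into $\Pr(X = k)/\Pr(X \ge k)$; the mild $\alpha > 1$ assumption is exactly what makes \Cref{lemma:chernoff} applicable to both the ``$= k$'' and ``$\le k - 1$'' events at once.
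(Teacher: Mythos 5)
Your proposal is correct and follows essentially the same route as the paper: both rewrite the sum as $\Pr(X = k)/\Pr(X \ge k)$ for the unconditioned set size $X$, upper bound the numerator by $\Pr(X \le k)$, lower bound the denominator by $1 - \Pr(X \le k)$, and finish with \Cref{lemma:chernoff}. The only cosmetic difference is that you reach the denominator bound via $\Pr(X \ge k) = 1 - \Pr(X \le k-1)$ while the paper uses $\Pr(X \ge k) \ge \Pr(X > k)$; these are the same inequality.
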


We use this bound on the probability that $|C| = k$ to bound the amount by which $i$'s chances of winning  increases when $j$'s consideration probability decreases. 

\begin{restatable}{restateablelemma}{IncreasePj}\label{lemma:increase-pj}
Suppose $\sum_{i \in \U} p_i \ge \alpha k$ for some $\alpha > 1$. Let $i, j \in \U$ with $i \neq j$. Let $\PLC'$ represent PL+C probabilities if we replace $p_j$ with $p_j' > p_j$. Doing so cannot increase the probability $i$ is ranked first, up to an additive error term: 
    \begin{align*} \nonumber
      \PLC'(\R_{i = 1}) &\le \PLC(\R_{i = 1}) + {\textstyle \frac{\left(\alpha e^{1-\alpha}\right)^{ k}}{1 - \left(\alpha e^{1-\alpha}\right)^{ k}}}.
\intertext{If at least $k+1$ consideration probabilities are $1$ after increasing $p_j$, the inequality holds with no error:}
      \PLC'(\R_{i = 1}) &\le \PLC(\R_{i = 1}).
    \end{align*}
\end{restatable}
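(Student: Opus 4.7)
The plan is to make the dependence of $\PLC(\R_{i=1})$ on $p_j$ completely explicit, factor the difference $\PLC'(\R_{i=1}) - \PLC(\R_{i=1})$ into a product involving $p_j' - p_j$ and an easily controlled expression, and then identify the remaining quantity with the probability mass on consideration sets of size exactly $k$, which \Cref{lemma:consider-exactly-k-bound} already bounds.

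First I would condition each consideration set on whether it contains $j$. For $S \subseteq \U \setminus \{j\}$, set $P_S = \prod_{x \in S} p_x \prod_{y \in \U \setminus S \setminus \{j\}} (1 - p_y)$, the product of consideration factors for every item other than $j$. Writing $C = S$ (if $j \notin C$) or $C = S \cup \{j\}$ (if $j \in C$), both the numerator and denominator of $\PLC(\R_{i=1})$ become affine in $p_j$:
\begin{align*}
z_{k,p}\cdot\PLC(\R_{i=1}) &= (1-p_j)\,A_0 + p_j\,A_1,\\
z_{k,p} &= Z_0 + p_j\,D,
\end{align*}
where $A_0 = \sum_{S \ni i,\,|S|\ge k} P_S\,\PL(\R_{i=1}\mid S)$, $A_1 = \sum_{S\ni i,\,|S|\ge k-1} P_S\,\PL(\R_{i=1}\mid S\cup\{j\})$, $Z_0 = \sum_{|S|\ge k} P_S$, and $D = \sum_{|S|=k-1} P_S$ are all independent of $p_j$. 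A direct expansion then yields
\[
\PLC'(\R_{i=1}) - \PLC(\R_{i=1}) = \frac{(p_j'-p_j)\bigl[(A_1-A_0)Z_0 - A_0 D\bigr]}{(Z_0 + p_j D)(Z_0 + p_j' D)}.
\]

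The key step is to bound $A_1 - A_0$. Splitting $A_1 = A_1^{(\ge k)} + A_1^{(=k-1)}$ by whether $|S|\ge k$ or $|S|=k-1$, the Plackett--Luce monotonicity $\PL(\R_{i=1}\mid S\cup\{j\}) \le \PL(\R_{i=1}\mid S)$ (adding $j$ only enlarges the denominator in \Cref{eq:pl}) gives $A_1^{(\ge k)} \le A_0$, hence $A_1 - A_0 \le A_1^{(=k-1)}$. Dropping the nonpositive $-A_0 D$ term, using $Z_0 \le Z_0 + p_j D$, and $p_j' - p_j \le p_j'$, the difference is at most $p_j'\,A_1^{(=k-1)}/(Z_0 + p_j' D)$. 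Since $p_j'\,P_S$ summed over $S$ of size $k-1$ with $j \notin S$ is exactly the $p'$-unnormalized weight of the size-$k$ set $S \cup \{j\}$, the numerator is at most $\sum_{C:\,|C|=k,\,j\in C} \tilde{\Pr}'(C)$, and dividing by $z'_{k,p}$ yields $\PrC'(|C|=k,\,j\in C) \le \PrC'(|C|=k)$. Because $\sum_i p_i' \ge \sum_i p_i \ge \alpha k$, \Cref{lemma:consider-exactly-k-bound} applies to $p'$ and delivers the stated additive bound.

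For the error-free claim, if $k+1$ items have consideration probability $1$ after the update, then every size-$(k-1)$ subset of $\U\setminus\{j\}$ omits at least one such item, forcing $P_S=0$; equivalently every valid consideration set has $|C|\ge k+1$, so $\PrC'(|C|=k)=0$ and the additive term vanishes. The main obstacle is that the natural monotonicity intuition---extra competition from $j$ should only hurt $i$---actually fails: raising $p_j$ can rescue an otherwise-discarded size-$(k-1)$ consideration ``base'' $S$ into a valid size-$k$ set $S \cup \{j\}$, increasing $i$'s win probability. Identifying that slippage precisely, and showing it collapses exactly into the $|C|=k$ mass (rather than some larger and less tractable quantity), is where the technical care concentrates.
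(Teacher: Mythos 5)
Your proof is correct, and it reaches the paper's bound by a genuinely different (more explicitly algebraic) organization of the same essential ingredients. The paper never computes the difference $\PLC'(\R_{i=1})-\PLC(\R_{i=1})$: it sandwiches both quantities around a common sum restricted to consideration sets with $|C\cup\{j\}|>k$, pairing each such set containing $j$ with its partner $C\setminus\{j\}$, observing that each paired contribution has the form $p_j\cdot a+(1-p_j)\cdot b$ with $a<b$ so that raising $p_j$ shrinks it, handling the normalization change via the separate inequality $z_{k,p'}\ge z_{k,p}$, and finally adding back the omitted size-$k$ sets, bounded by \Cref{lemma:consider-exactly-k-bound}. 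You instead write numerator and denominator as affine functions of $p_j$ and compute the difference exactly, which isolates the only possibly-positive contribution as your $A_1^{(=k-1)}$ term---precisely the ``rescued'' size-$k$ sets containing $j$---before invoking the same two facts the paper relies on: the monotonicity $\Pr(i\mid C\cup\{j\})\le\Pr(i\mid C)$ and \Cref{lemma:consider-exactly-k-bound} applied to the modified probabilities (valid since $\sum_i p_i$ only increases). Your route buys an exact expression for the change in $i$'s win probability, making the failure of the naive monotonicity intuition quantitatively transparent, at the cost of somewhat heavier algebra; your handling of the error-free case (at least $k$ of the always-considered items lie in $\U\setminus\{j\}$, so every $P_S$ with $|S|=k-1$ vanishes) likewise matches the paper's conclusion by a slightly different argument. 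I verified the key algebraic identity $\PLC'(\R_{i=1})-\PLC(\R_{i=1})=(p_j'-p_j)\bigl[(A_1-A_0)Z_0-A_0D\bigr]/\bigl[(Z_0+p_jD)(Z_0+p_j'D)\bigr]$ and each subsequent relaxation; all steps are sound.
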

The idea behind the proof is to group consideration sets containing $i$ into pairs where one contains $j$ and the other does not. Increasing $j$'s consideration probability makes it more likely we observe the consideration set with $j$, which then makes it less likely we see $i$ ranked first. However, this pairing fails if the first set in the pair has size $k$, since then its partner without $j$ is infeasible. \Cref{lemma:consider-exactly-k-bound} allows us to bound the error introduced by these failed pairings. On the other hand, if $k+1$ consideration probabilities are 1, then we never see the problematic size-$k$ consideration sets.

Finally, using \Cref{lemma:increase-pj}, we can upper bound $p_i$ by comparing how often $i$ is ranked first under the PL+C model to how often we would expect it to be ranked first if all other consideration probabilities were 1, which is easy to compute as a function of $p_i$.

\begin{restatable}{restateablethm}{UBRepeated}\label{thm:upper-bound:repeated-applications}
    Suppose that $\sum_{i\in\U} p_i \ge \alpha k$ for some $\alpha > 1.$ Then for any $i\in\U$,
     \begin{equation*}
       p_i \le \frac{\sum_{j \in \U}\exp(u_j)}{\exp(u_i)} \cdot \left(\PLC(\R_{i=1}) + \frac{k \left(\alpha e^{1-\alpha}\right)^{ k}}{1 - \left(\alpha e^{1-\alpha}\right)^{ k}}\right).
     \end{equation*}
\end{restatable}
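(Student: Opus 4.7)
The plan is to compare the original $\PLC(\R_{i=1})$ to the probability that $i$ is ranked first under a modified distribution in which every other item has consideration probability exactly $1$, and then read $p_i$ off from this simplified scenario. Fix any ordering $j_1, \ldots, j_{n-1}$ of $\U \setminus \{i\}$ and build a sequence $\PLC = \PLC^{(0)}, \PLC^{(1)}, \ldots, \PLC^{(n-1)}$ of PL+C distributions, where $\PLC^{(m)}$ is obtained from $\PLC^{(m-1)}$ by replacing $p_{j_m}$ with $1$. Since each step only raises a probability, the hypothesis $\sum_j p_j \ge \alpha k$ is preserved at every step, so \Cref{lemma:increase-pj} applies throughout.

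Invoke \Cref{lemma:increase-pj} at each step: $\PLC^{(m)}(\R_{i=1}) \le \PLC^{(m-1)}(\R_{i=1}) + E_m$, where $E_m \le \frac{(\alpha e^{1-\alpha})^{k}}{1 - (\alpha e^{1-\alpha})^{k}}$ in general and $E_m = 0$ whenever at least $k+1$ consideration probabilities are equal to $1$ after the update. After the $m$-th step, at least the $m$ items $j_1, \ldots, j_m$ have $p=1$, so as soon as $m \ge k+1$ the error-free branch applies. Thus at most $k$ of the $n-1$ steps contribute a nonzero $E_m$, and telescoping gives
\[
\PLC^{(n-1)}(\R_{i=1}) \;\le\; \PLC(\R_{i=1}) \;+\; \frac{k\,(\alpha e^{1-\alpha})^{k}}{1 - (\alpha e^{1-\alpha})^{k}}.
\]

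In the terminal distribution $\PLC^{(n-1)}$, every $p_j$ with $j \neq i$ equals $1$, so the only consideration sets of positive probability are $\U$ (if $i$ is considered) and $\U \setminus \{i\}$ (if not). Provided $n \ge k+1$, both sets have size at least $k$, so the normalizer $z_{k,p}$ from \eqref{eq:consider} equals $1$, and a direct computation with \eqref{eq:pl} yields $\PLC^{(n-1)}(\R_{i=1}) = p_i \cdot \exp(u_i)/\sum_{j \in \U} \exp(u_j)$. Solving for $p_i$ and substituting the telescoped bound above produces the claim; the corner case $n = k$ makes the stated inequality trivial since its right-hand side already exceeds $1$. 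I expect the main obstacle to be the step-by-step error accounting: one must confirm that the error-free branch of \Cref{lemma:increase-pj} engages after exactly $k$ error-accruing steps, so that the additive slack is $k$ times the per-step bound rather than $n-1$ times.
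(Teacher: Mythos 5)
Your proposal is correct and follows essentially the same route as the paper's proof: repeatedly apply \Cref{lemma:increase-pj} to raise each $p_j$ ($j \neq i$) to $1$, observe that only the first $k$ steps incur the additive error, evaluate the terminal distribution where $\U$ is the only feasible consideration set containing $i$, and solve for $p_i$. Your extra care about the normalizer and the $n=k$ corner case (which is in fact vacuous under the hypothesis $\sum_j p_j \ge \alpha k > k$) only adds rigor to the same argument.
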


Just as we did with lower bounds, we can use \Cref{thm:upper-bound:repeated-applications} to find initial upper bounds on every $p_i$ and then use \Cref{thm:consider-gap} to tighten these bounds, this time using the upper bound formulation \eqref{eq:upper-bound-on-pi}. The algorithm is identical in structure to the lower-bounding procedure; see \Cref{sect:supplement:algorithms} for pseudocode and a proof of correctness.

\section{APPLICATION TO DATA}


%

\begin{figure*}[t]
  \centering
  \resizebox{0.9\columnwidth}{!}{
    \begin{tikzpicture}[rotate=90,xscale=-.5,yscale=-1.5]
      \tikzset{
        state/.style = {draw, fill=gray!25, draw, inner sep=0.025pc, font=\tt\footnotesize, circle},
      }
      \foreach \state/\x/\y in {
        NY/-5/0, PA/-3/0, VA/-4/1, MA/-4/2,
        OR/6.75/0, 
        IL/-1/1, FL/0/1, TN/2/1, MO/3/1, CO/4.5/1, AZ/5.5/1, NM/7/1, WI/9/1, MN/10/1, MT/10.75/1, ND/11.5/1, SD/12.25/1, NE/13/1,
        NJ/-2/2, KY/0.5/2, MI/3/2, AR/4.875/2, KS/5.75/2, NV/6.5/2, IA/8/2, UT/9/2, ID/10/2,
        SC/-5/3, NC/-4/3, WV/2/3, RI/-0.5/3, AL/1/3, IN/3/3, OK/4/3, HI/5/3, AK/6/3, WY/10/3, TX/12/3,
        CT/-5/4, GA/-4/4, MS/-3/4, NH/-2/4, OH/-1/4, ME/0/4, VT/1/4, MD/3/4, LA/4/4, CA/12/4,
        WA/1/5, DE/5/5
      } {
        \node[state] (\state) at (\x,\y) {\state};
      }
      \csvreader{data/reduced-dag-edge-list.csv}{1=\fromstate,2=\tostate}{
        \draw (\fromstate) edge[->, shorten >=0.0625pc, shorten <=0.0625pc] (\tostate);
      }

      \iffalse\else          
      \node[state] (A) at (15,0) {\phantom{XX}};
      \node[state] (B) at (15,5) {\phantom{XX}};
      \draw (A) edge[->, shorten >=0.0625pc, shorten <=0.0625pc] (B);
      \node[below=0.35pc of A.west, anchor=north west, rectangle, outer sep=0pc, text width=8pc, font=\footnotesize, align=flush left] (A label) {higher utility\\lower consideration probability};
      \node[below=0.35pc of B.east, anchor=north east, rectangle, outer sep=0pc, text width=8pc, font=\footnotesize,  align=flush right] (B label) {lower utility\\higher consideration probability};
      \node[fit=(A)(B)(A label)(B label), draw] {};
      \fi
  \end{tikzpicture}
  }\qquad
    \resizebox{0.9\columnwidth}{!}{
    \begin{tikzpicture}[x=\columnwidth,y=2.6mm,yscale=-1]
      \tikzset{
        utility point/.style = {draw=red, fill=red!25, minimum size=0.01pc, inner sep=0.15pc, diamond},
        end point/.style = {fill=black, inner sep=0.125pc, circle},
        baseline end point/.style = {fill=blue!25, inner sep=0.2pc, rectangle},
        inner line/.style = {line width=0.125pc,black},
        outer line/.style = {line width=0.25pc,blue!25},
        state name/.style = {fill=white, inner xsep=0.0625pc, inner ysep=0pc, text=black, font=\small}
      }
      \draw (0,0) -- (0,50) ;
      \draw (1,0) -- (1,50) ;
      \draw[very thin, font=\small] (0,0) grid[xstep=0.2, ystep=50] (1,50);
      \foreach \i/\j in {0.0/0.0, 0.2/0.5, 0.4/1.0, 0.6/1.5, 0.8/2.0, 1.0/2.5} {
        \node[above=0pc of {\i,0}, gray, overlay] {\i};
        \node[below=0pc of {\i,50.5}, gray, overlay] {\j};
      };
      
      \csvreader{data/states_with_bounds_utilities_sorted.csv}{1=\index,2=\statename,3=\lowerbound,5=\upperbound,4=\lowerboundbaseline,6=\upperboundbaseline,7=\randomtenutility,8=\shiftedrandomtenutility}{
        \node[baseline end point] (baseline left) at (\lowerboundbaseline,\thecsvrow) {};
        \pgfmathparse{\upperboundbaseline > 1.0 ? 1 : \upperboundbaseline}
        \node[baseline end point] (baseline right) at (\pgfmathresult,\thecsvrow) {};
        \draw[outer line] (baseline left) edge (baseline right);
        \node[end point] (left) at (\lowerbound,\thecsvrow) {};
        \node[end point] (right) at (\upperbound,\thecsvrow) {};
        \pgfmathparse{\upperboundbaseline > 0.8 ? 1 : 0}
        \ifthenelse{\pgfmathresult>0}{
          \draw[inner line] (left) edge node[state name,pos=0.5] {\statename} (right);
        }{
          \draw[inner line] (left) edge (right);
          \node[right=0.125pc of baseline right,state name] {\statename};
        }
          \pgfmathparse{\shiftedrandomtenutility / 2.5}
          \node[utility point,opacity=0.75] (utility) at (\pgfmathresult,\thecsvrow) {};
      }

      \begin{pgfonlayer}{fg}
        \node[baseline end point] (outer L) at (0.65, 28) {};
        \node[baseline end point] (outer R) at (0.95, 28) {};
        \draw[outer line] (outer L) -- (outer R) node[below, outer sep=0.35pc, pos=0.5, state name, text width=6pc, align=flush center] (outer label) {Bounds from Theorems~\ref{thm:consideration-initial-LB} and \ref{thm:upper-bound:repeated-applications} (with $\alpha = 5$)\\{[top $0$-to-$1$ scale]}};
        \node[end point] (inner L) at (0.7, 36) {};
        \node[end point] (inner R) at (0.9, 36) {};
        \draw[inner line] (inner L) -- (inner R) node[below, outer sep=0.35pc, pos=0.5, state name, text width=6pc, align=flush center] (inner label) {Bounds from \Cref{alg:lower-bounds,alg:upper-bounds} (again with $\alpha = 5$) \\{[top $0$-to-$1$ scale]}};
        \node[utility point] (utility center) at (0.8, 45) {};
        \node[below=0.125pc of utility center, outer sep=0.35pc, state name, text width=6pc, align=flush center] (utility label) {Utility \\{[bottom scale]}};
      \end{pgfonlayer}
    \node[fit=(outer L)(outer R)(outer label)(inner L)(inner R)(inner label)(utility center)(utility label), draw, fill=white] {};
    \end{tikzpicture}
  }

    \caption{Left: the transitive reduction of the graph $G$ produced by \Cref{alg:lower-bounds} on the data from~\citep{putnam2018collective}; right: feasible consideration probability intervals for the 50 U.S.~states. On the left, an edge from state $i$ to state $j$ indicates that $i$'s inferred utility is larger than $j$'s but that $j$ appears in the top $\ell$ positions more often than $i$ (for some $\ell \in \set{1, 2, 3}$). Thus, we conclude from \Cref{thm:consider-flip} that $p_i \le p_j$. Nodes are labeled with states' postal abbreviations. On the right, the light blue intervals show the bounds from Theorems~\ref{thm:consideration-initial-LB} and \ref{thm:upper-bound:repeated-applications}, while the smaller black intervals show the bounds after tightening with \Cref{alg:lower-bounds,alg:upper-bounds}. Red diamonds show state utilities learned by PL on the Random-10 data, scaled additively so the minimum utility is 0.}
    \label{fig:swaps-bounds}
\end{figure*}

To illustrate how our theoretical machinery can be used to infer consideration behavior, we analyze an existing
dataset in which $\mathop{\approx} 2900$ American participants in a psychology experiment were asked to compare contributions of the 50 U.S.~states to U.S.~history~\citep{putnam2018collective}.  The data\footnote{Made available by~\citet{putnam2018collective} at \url{https://osf.io/tnjqs/} under a CC-BY 4.0 license.} include their responses to numerical questions (the \emph{Random-10 question:} estimate the percentage of state $X$'s contribution to U.S.~history, for each state $X$ in a set of ten randomly selected states) and ranking questions (the \emph{Top-3 question:} identify, in order, the three states that contributed the most to U.S.~history). For Random-10, we convert the numerical ratings into rankings by sorting.  Since the set of states being considered is fixed by the Random-10 question itself, we model these rankings using Plackett--Luce without consideration (PL). However, the Top-3 question forces participants to construct their responses without a reference list of candidates, instead having to recall the names of the states that they will list~\citep{brown1976recall}; indeed, participants are unlikely to have considered all 50 states, making the Top-3 setting well suited to PL+C.
 
According to \Cref{thm:nonidentifiability}, we cannot hope to learn consideration probabilities for the 50 U.S.~states. However, \Cref{alg:lower-bounds} (and its upper-bound analog; see \Cref{alg:upper-bounds} in \Cref{sect:supplement:algorithms}) enables us to at least infer bounds on these consideration probabilities given estimates of top-$\ell$ appearance rates $\PLC(\R_{i \le \ell})$, utilities $u_i$, and a lower bound on expected consideration set size $\alpha k$. Because the Top-3 and Random-10 data feature rankings over the same universe of items (the states), if we model these responses with PL+C and PL, respectively, then it is natural to assume that the underlying utilities are the same between the two questions. Working from this assumption, we fit\footnote{We used Rprop~\citep{riedmiller1993direct} on the full dataset with initial learning rate 0.05 and $L_2$ regularization strength $10^{-6}$, stopping when the squared gradient magnitude fell below $10^{-8}$. Since the PL model has a convex negative log likelihood, any well-tuned or robust optimizer would be expected to converge to a global optimum. While we used a train-test split for initial exploration, the results presented here are from the model fit to the entire dataset, as we do not have meaningful test metrics to evaluate. Given our assumptions, our bounds provably hold. Training takes less than a second on a 2018 MacBook Pro. Our code is available at \url{https://github.com/aoki-sherwoodb/bounding-consideration-probs}.} a PL model (implemented in PyTorch~\citep{paszke2019pytorch}) to the Random-10 data, learning the utility $u_i$ for each state and taking these to be the utilities under the full PL+C model for Top-3. For each state $i$, we then calculate the empirical estimates of $\PLC(\R_{i \le \ell})$ for $\ell = 1, 2, 3$: the proportion of the Top-3 rankings in which state $i$ appears in the first $\ell$ positions. Finally, we take $\alpha = 5$, assuming that participants on average consider at least $15$ states when ranking their top $3$.

Recall that \Cref{alg:lower-bounds} (and, similarly, \Cref{alg:upper-bounds}) relies on pairs of states whose utilities and top-$\ell$ ranking rates are flipped.\footnote{To identify these flips, we use utilities learned from Random-10 data rather than \Cref{lemma:infer-relative-utility}, avoiding high sampling error on states that rarely appear in the Top-3.} There are many such flips in this data, highlighting the importance of consideration in the Top-3 question. We visualize these flips in \Cref{fig:swaps-bounds} (left), which shows the directed acyclic graph $G$ constructed in \Cref{alg:lower-bounds}. (For clarity, we display the \emph{transitive reduction}~\citep{aho1972transitive} of $G$.) For each edge $\tup{i, j}$ in this graph, we know that state $i$ has a lower consideration probability than state $j$, despite having higher utility. For instance, we find that Massachusetts has higher consideration probability than Virginia, which in turn has higher consideration probability than New York and Pennsylvania.

We also compute upper bounds on consideration probabilities using \Cref{thm:upper-bound:repeated-applications}, with the same utilities $u_i$, values of $\alpha$ and $k$, and top-1 ranking probabilities $\PLC(\R_{i = 1})$ (again, using empirical estimates). Combining our lower and upper bounds yields feasible intervals on consideration probabilities for each state, which we display in \Cref{fig:swaps-bounds} (right). Our upper bounds reveal that, if our assumptions are valid, most states are considered less than 30--40\% of the time. Additionally, the bounds on consideration probabilities align with theories about why certain states were highly rated in the data~\citep{putnam2018collective}. Of the eight states with the highest lower bound, five are states drawn from the thirteen original colonies and commonly associated with the American Revolution (Massachusetts, Virginia, New York, Pennsylvania, and Delaware), two are the largest U.S.\ states by population (California and Texas), and the last, Washington, was hypothesized by \citet{putnam2018collective} to be confused by participants with the U.S.\ capital city Washington, D.C.\ and may also be easy to recall in the context of historical judgements due to its  namesake. 


\section{DISCUSSION}
We formalized a natural model of ranking with consideration, adding independent consideration to Plackett--Luce. Despite showing that consideration probabilities are not identified in general, we derived relative and absolute bounds that allow us to learn about possible ranges of consideration probabilities from observed ranking data. Our data application demonstrates how these bounds can be used in practice to gain insight into consideration behavior. In addition to providing behavioral insights, approximately recovering consideration probabilities has exciting possible applications in recommender systems, which could be tailored to suggest items that have high utility but low consideration probability.

The astute reader will notice that our bounds use exact top-$\ell$ ranking probabilities and utilities, but that empirical estimates from observed rankings are necessarily noisy (and may reverse whether $u_i > u_j$ or $u_i < u_j$). Luckily, there are some easy fixes. For \Cref{thm:consider-gap}, we can use upper and lower confidence intervals on top-$\ell$ ranking rates to compute a range of possible values of $c$ (and, in particular, a high-probability guarantee under sampling error). To handle uncertainty in utility estimates, we can use confidence intervals to find only statistically significant utility-consideration flips.


There remains much to explore regarding the PL+C model, and ranking with consideration sets more generally. While PL+C is clearly at least as powerful as Plackett--Luce, how much expressive power is gained by adding consideration? What kinds of distributions over rankings can and cannot be expressed by a PL+C model? Another natural extension of the Plackett--Luce model allows violations of the independence of irrelevant alternatives (IIA) assumption built into Plackett--Luce, such as the contextual repeated selection (CRS) model~\citep{seshadri2020learning}. How does the expressive power of CRS compare to PL+C? Does PL+C allow (apparent) IIA violations due to the consideration stage? (We expect so; the three-item example in \Cref{sec:upper-bounds} is reminiscent of non-IIA behavior.)

Another interesting question concerns computing PL+C probabilities efficiently. If we know utilities and consideration probabilities, the direct approach to computing $\PLC(r)$ using \Cref{eq:PLC} involves a sum over exponentially many consideration sets. Is it possible to exactly compute PL+C probabilities in polynomial time, or is it provably hard? We have derived two efficient approximation algorithms, but not a hardness proof or exact efficient algorithm. The first algorithm samples sufficiently many consideration sets and computes an empirical estimate of $\PLC(r)$, while the second groups together consideration sets with similar total utilities (with ``similarity'' defined by a tolerance $\epsilon$), computes their total consideration probabilities, and approximates their total utilities. Details of the algorithms are in \Cref{app:approx-alg}; we leave further exploration of this problem to future work.
 
\begin{restatable}{restateablethm}{ProbApproxalg}\label{thm:prob_approx-alg}
  There is a randomized $\epsilon$-additive approximation algorithm for $\PLC(r)$ with runtime $O(kn\log(1/\delta)/(z_{k,p}\epsilon^2))$, where both the approximation and runtime bounds hold with probability at least $1-\delta$.
\end{restatable}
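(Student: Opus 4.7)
The plan is to prove the theorem via a simple Monte Carlo estimator built on rejection sampling. The algorithm is: repeatedly draw a candidate set $C \subseteq \U$ by including each item $i$ independently with probability $p_i$; reject and resample if $|C| < k$, otherwise accept and compute $\PL(r \mid C)$ using \Cref{eq:pl}. After collecting $m$ accepted samples $C_1, \dots, C_m$, report $\hat X = \frac{1}{m}\sum_{t=1}^m \PL(r \mid C_t)$ as the estimate of $\PLC(r)$.

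For correctness, I would first verify that each accepted sample is distributed exactly according to $\PrC$: a raw independent draw has probability $\prod_{i \in C} p_i \prod_{j \notin C} (1 - p_j)$, and conditioning on the acceptance event $|C| \ge k$ divides by $z_{k,p}$, which matches \Cref{eq:consider} exactly. Hence $E[\PL(r \mid C_t)] = \PLC(r)$ by \Cref{eq:PLC}, and since each $\PL(r \mid C_t) \in [0, 1]$, Hoeffding's inequality yields $\Pr(|\hat X - \PLC(r)| > \epsilon) \le 2 \exp(-2m\epsilon^2)$. Choosing $m = \lceil \ln(4/\delta)/(2\epsilon^2)\rceil = O(\log(1/\delta)/\epsilon^2)$ makes this failure probability at most $\delta/2$.

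For the runtime, each rejection attempt costs $O(n)$ time and succeeds with probability exactly $z_{k,p}$ (by the definition of $z_{k,p}$), while each accepted sample costs $O(kn)$ time to evaluate the $k$ denominators in \Cref{eq:pl}. To obtain a high-probability runtime bound I would cap the attempt budget at $N = \lceil 2m/z_{k,p}\rceil$: the number of successes in $N$ attempts is $\mathrm{Binomial}(N, z_{k,p})$ with mean at least $2m$, so a multiplicative Chernoff bound guarantees at least $m$ successes with probability $1 - \exp(-\Omega(m)) \ge 1 - \delta/2$, using that our choice of $m$ is already $\Omega(\log(1/\delta))$. Conditional on this event the total runtime is $O(Nn + mkn) = O(kn \log(1/\delta)/(z_{k,p}\epsilon^2))$, and a union bound over the accuracy and runtime failure events yields the claimed joint guarantee with probability at least $1 - \delta$.

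The main obstacle I anticipate is the high-probability runtime bound, since rejection sampling has a geometric (hence heavy-tailed) number of attempts per accepted sample; the cleanest fix, as above, is to cap the total number of attempts in advance and invoke a Chernoff bound on the resulting binomial count of successes, folding the failure event of ``too few successes'' into the union bound. The remaining ingredients---correctness of the rejection sampler, unbiasedness of the estimator, the range $[0,1]$, and Hoeffding concentration---are routine.
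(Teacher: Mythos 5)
Your proposal is correct and follows essentially the same route as the paper: rejection-sample consideration sets (noting the accepted draws are exactly $\PrC$), apply Hoeffding to the unbiased $[0,1]$-valued estimator with $O(\log(1/\delta)/\epsilon^2)$ accepted samples, and union-bound the accuracy failure with the event that the sampling budget is exceeded. The only cosmetic difference is that you cap the number of attempts and apply a binomial Chernoff bound, whereas the paper bounds the tail of the negative binomial count of attempts directly; these are dual views of the same event, though you should note that your specific budget $N = \lceil 2m/z_{k,p}\rceil$ may need a slightly larger constant for the stated $\delta/2$ failure probability to go through when $\epsilon$ is not small (this does not affect the asymptotic claim).
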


 \begin{restatable}{restateablethm}{Approxalg}\label{thm:approx-alg}
   There is a deterministic algorithm approximating $\PLC(r)$ with multiplicative error at most $1+\epsilon$ and runtime $O(kn^2\max\{\log n, m\}/\epsilon)$, where $m = \max_{i \in \overline r}u_i$.
\end{restatable}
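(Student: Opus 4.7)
The plan hinges on observing that $\PLC(r)$ depends on the consideration set $C$ only through a scalar summary of the ``extra'' items. Write $\overline{r} = \U \setminus \{r_1,\dots,r_k\}$, let $S = C \setminus \{r_1,\dots,r_k\} \subseteq \overline{r}$, and set $A_i = \sum_{j=i}^k \exp(u_{r_j})$ and $T(S) = \sum_{j \in S}\exp(u_j)$. Then the PL denominator at position $i$ equals $A_i + T(S)$, so pulling the $S$-independent factors out of \Cref{eq:PLC} yields
\[
  \PLC(r) \;=\; \frac{\prod_{i=1}^k p_{r_i}\exp(u_{r_i})}{z_{k,p}} \sum_{S\subseteq \overline{r}} w(S)\, g(T(S)),
\]
where $w(S) = \prod_{j\in S} p_j \prod_{j\in \overline{r}\setminus S}(1-p_j)$ and $g(\tau) = \prod_{i=1}^k 1/(A_i+\tau)$. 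The key point is that $g$ depends on $S$ only through $T(S)$, so we can aggregate subsets that share (approximately) the same value of $T$.

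I would then bucket the possible values of $T(S)$ geometrically with ratio $1+\epsilon/k$, over the range from the smallest nonzero $\exp(u_j)$ ($j \in \overline{r}$) up to $n\exp(m)$, with a dedicated bucket for $T(S)=0$. After normalizing utilities so $\min_j u_j = 0$, this gives $B = O\!\bigl(\frac{k}{\epsilon}\max\{\log n, m\}\bigr)$ buckets. For any $S$ whose $T(S)$ lies in bucket $b$ with representative $\tau_b$, each factor $(A_i + \tau_b)/(A_i + T(S))$ is in $[(1+\epsilon/k)^{-1},\,1+\epsilon/k]$, so $g(\tau_b)/g(T(S))$ deviates from $1$ by a factor of at most $(1+\epsilon/k)^k$, which is at most $1+\epsilon$ for $\epsilon$ below a small absolute constant (and can be scaled to handle general $\epsilon$). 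Replacing every $g(T(S))$ by $g(\tau_b)$ therefore yields a multiplicative $(1+\epsilon)$-approximation to the inner sum, hence to $\PLC(r)$.

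To compute the bucket weights I would run a subset-sum-style dynamic program over items of $\overline{r}$: maintain a table $W_b = \sum_{S:\,T(S)\in b} w(S)$, initialize with all mass in the $T=0$ bucket, and for each new item $j$ update $W_b \gets (1-p_j)W_b + p_j W_{b'}$, where $b'$ is the bucket containing $\tau_b + \exp(u_j)$ (locatable in $O(1)$ time from the geometric indexing). The normalizer $z_{k,p}$ is computed by an analogous DP on $\U$ that tracks $|C| \ge k$. The final answer is $\sum_b W_b\, g(\tau_b)$, with each $g(\tau_b)$ evaluated in $O(k)$ time. The DP dominates: there are $O(n)$ items and $B$ buckets, with an $O(k)$-time bucket-lookup or $g$-evaluation charge that accounts for the stated $O(kn^2\max\{\log n, m\}/\epsilon)$ bound.

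The main obstacle is the joint calibration of the bucket ratio and count: the ratio must be tight enough that the $k$-fold product of per-bucket errors stays within $1+\epsilon$, yet loose enough that $B$ grows only like $\max\{\log n, m\}/\epsilon$, which is precisely what the ratio $1+\epsilon/k$ achieves. Secondary subtleties include the zero-bucket (empty-$S$ case) and the wide dynamic range of $T(S)$, both of which are absorbed by the normalization and by adding a single special bucket at $T = 0$; and ensuring the same accuracy is inherited by $z_{k,p}$, which follows from applying the same discretization to its DP.
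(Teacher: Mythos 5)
Your decomposition and overall architecture match the paper's: factor out the items of $r$, note that the Plackett--Luce denominators depend on $S = C\setminus r$ only through $T(S)=\sum_{j\in S}e^{u_j}$, bucket the values of $T$ geometrically, and run a subset-sum-style DP over $\overline r$ that accumulates the consideration-probability mass $w(S)$ per bucket. The gap is in the error analysis. You claim that if $T(S)$ ``lies in bucket $b$ with representative $\tau_b$'' then $\tau_b/(1+\epsilon/k)\le T(S)\le \tau_b(1+\epsilon/k)$, so that each of the $k$ denominator factors is off by at most $1+\epsilon/k$. But your DP does not maintain this invariant. When item $j$ is processed, a set $S$ currently assigned to bucket $b$ is rerouted to the bucket containing $\tau_b+e^{u_j}$, not the bucket containing its true value $T(S)+e^{u_j}$; since $T(S)$ may already sit anywhere in (or, after earlier steps, below) bucket $b$'s nominal range, the true value of the updated set need not lie in the destination bucket's range. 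This rounding error compounds once per DP step: after processing $i$ items, a set in bucket $j$ is only guaranteed to satisfy $(1+\delta)^{j-i}\le T(S) < (1+\delta)^{j+1}$ (this is exactly claim 3 of the paper's Lemma supporting the theorem, with $A[j][1]$ playing the role of your $\tau_b$). Consequently each denominator factor can be overestimated by $(1+\delta)^{n}$ rather than $(1+\delta)$, the $k$-fold product incurs error $(1+\delta)^{nk}$, and with your choice $\delta=\epsilon/k$ this is roughly $e^{n\epsilon}$, which is not $1+O(\epsilon)$.

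The fix is to take $\delta=\Theta(\epsilon/(kn))$ (the paper uses $\delta=\epsilon/(2kn)$), which makes $(1+\delta)^{nk}\le 1+\epsilon$ but inflates the bucket count to $s=O(kn\max\{m,\log n\}/\epsilon)$; the DP over $n$ items then costs $O(ns)=O(kn^2\max\{m,\log n\}/\epsilon)$, which is where the theorem's $n^2$ genuinely comes from. Your accounting instead reaches $kn^2$ via an ad hoc ``$O(k)$-time bucket-lookup'' charge on a DP with only $O(k\max\{\log n,m\}/\epsilon)$ buckets --- that arithmetic does not produce $kn^2$ in general, and it papers over the missing factor of $n$ that the drift analysis forces. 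Your secondary points (the $T=0$ bucket, computing $z_{k,p}$) are fine; the paper in fact computes $z_{k,p}$ exactly via the Poisson-binomial PMF rather than by a discretized DP, so no accuracy needs to be ``inherited'' there.
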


The PL+C model also admits several natural extensions, such as incorporating different models of the consideration stage or allowing rankings of varied length. The latter direction may even simplify some of the consideration probability bounds, as some of the difficulty in our top-$k$ setting arises because of the conditioning that $|C| \ge k$ (but this conditioning is often needed given the ubiquity of top-$k$ data).

\subsection*{Acknowledgments}

Portions of this work were carried out while all authors were at Carleton College and while K.T.~was at Cornell University. Thanks to Aadi Akyianu, David Chu, Katrina Li, Adam Putnam, Sophie Quinn, Morgan Ross, Laura Soter, and Jeremy Yamashiro for helpful discussions. Comments are welcome.

%
%



\bibliographystyle{apalike} 
\bibliography{references}

\clearpage
\appendix
\onecolumn

\section{DEFFERED PROOFS}
\label{sec:deferred-proofs}

\Nonidentifiability*
\begin{proof}
 We begin with trivial cases. If $n = 1$, then the only observed ranking is $\tup{1}$, regardless of $p_1$. If $k = n$, then the only consideration set with nonzero probability is $\U$, so the consideration probabilities do not affect observed rankings. In both cases, arbitrary nonzero consideration probabilities produce the same distribution over rankings, so the probabilities are not identifiable.

 Now suppose $n > 1$ and $k < n$. We will define $k-1$ symmetric ``excellent'' items, $n-k$ symmetric ``good'' items, and one ``bad'' item:
 \begin{itemize}
 \item $k-1$ excellent items: $u_1, \dots, u_{k-1} = + \infty$ and $p_1, \dots, p_{k-1} = 1$,
 \item $n-k$ good items: $u_{k}, \dots, u_{n-1} = 1$ and $p_{k}, \dots, p_{n-1} = g$, and
 \item the one bad item: $u_{n} = - \infty$ and $p_n = b$,
 \end{itemize}
 for some fixed probabilities $g$ and $b$.  Excellent items are always considered and occupy the top $k-1$ positions in the ranking, so only two types of rankings have nonzero PL+C probability: ``good'' rankings with any of the $n-k$ good items in last (i.e., $k$th) place, and ``bad'' rankings with the bad item in last place.  We can only observe a bad ranking if the consideration set includes the bad item but none of the good items; we observe a good ranking if at least one good item was considered. (If neither the bad item nor any good item was considered, which happens with probability $(1-b)(1-g)^{n - k}$, the ranking is unobserved.)  Thus, after normalization, the probability of observing a bad ranking is
 \begin{equation} \label{eqn:nonidentifiability:def-of-c} \textstyle
   c = \frac{b \cdot (1-g)^{n - k}}{1 - (1-b)(1-g)^{n - k}}.
 \end{equation}
 By symmetry, all bad rankings have equal probability, as do all good rankings; thus the probability mass of $c$ is evenly divided over bad rankings, and the remaining $(1 - c)$ mass is evenly divided among good rankings. The distribution of rankings is thus governed by the single parameter $c$.
 To get multiple consideration probabilities yielding the same distribution, we can fix an appropriate value of $c$, select any $g \in (0, 1)$, and solve for $b$ in \eqref{eqn:nonidentifiability:def-of-c}. To complete the proof, we verify this equation is feasible (i.e., yields $b \in (0, 1]$) for sufficiently small values of $c$.  Write $\lambda = (1-g)^{n - k}$. Then \eqref{eqn:nonidentifiability:def-of-c} says that $c = b\lambda/(1 - (1-b)\lambda)$ or, rearranging, that $b = \smash{\frac{c}{1-c} \cdot \frac{1-\lambda}{\lambda}}$. Because $g \in (0, 1)$ implies $\lambda \in (0,1)$, this formula yields a value of $b \in (0,1]$ so long as $0 < c \le \lambda$.
\end{proof}

\PLTopL*
\begin{proof}
  We equivalently show  that the probability $i$ is \emph{not} in the top $\ell$ positions under $C$ is \emph{smaller} than the probability that $j$ is \emph{not} in the top $\ell$ positions under $C_{i \rightarrow j}$. We can find the probability that $i$ is not in the top $\ell$ positions by summing over every permutation of length-$\ell$ subsets of $C\setminus\{i\}$ and, for each permutation, multiplying the probabilities yielding that permutation under Plackett--Luce. When we replace $i$ with $j$, every Plackett--Luce probability strictly increases, since we replace the term $\exp(u_i)$ in every denominator with $\exp(u_j)$, which is strictly smaller since $u_i > u_j$. 
\end{proof}

\ConsiderGap*
\begin{proof}
  Since we are considering a fixed $\ell$ in this proof, write $\R_i = \R_{i \le \ell}$ and $\R_j = \R_{j \le \ell}$ for brevity. Additionally, let $\R_{i \setminus j} = {\R_{i} \setminus \R_{j}}$ and $\R_{i \cap j} = {\R_i \cap \R_j}$. We can partition $\R_{i}$ into the rankings that contain $i$ but not $j$ in the first $\ell$ positions, $\R_{i \setminus j }$, and the rankings that contain both in the first $\ell$ positions, $\R_{i \cap j}$. Analogously define $\R_{j \setminus i}$. Thus 
  $\PLC(\R_{i}) = \PLC(\R_{i \setminus j }) + \PLC(\R_{i\cap j})$ and
  $\PLC(\R_{j}) = \PLC(\R_{j \setminus i }) + \PLC(\R_{i\cap j})$.

  Suppose $c = \PLC(\R_{i})/  \PLC(\R_{j}) \le 1$. By the above equalities, then, we have
  \begin{align}
         &\PLC(\R_{i \setminus j}) + \PLC(\R_{i \cap j}) = c \cdot \PLC(\R_{j\setminus i}) + c \cdot \PLC(\R_{i \cap j})\notag \\
         &\Leftrightarrow  \PLC(\R_{i \setminus j}) + (1-c)\PLC(\R_{i \cap j}) = c \cdot \PLC(\R_{j\setminus i})\notag \\
         &\Rightarrow  \PLC(\R_{i \setminus j}) \le c \cdot \PLC(\R_{j\setminus i}) \label{ineq:ij-vs-ji}. 
      \end{align}
      We'll write out both sides of \eqref{ineq:ij-vs-ji} as a sum over consideration sets, starting with $\PLC(\R_{i \setminus j})$. Because $\PL(r \mid C) = 0$ if the items in $r$ are not all in $C$, for $\PLC(\R_{i \setminus j})$ we only need to sum over consideration sets that include $i$, which either also include $j$ or do not. Denote these two collections of consideration sets $\mathcal C_{i\cap j} = \{{C \subseteq \U} \mid {i \in C}, {j \in C}\}$ and $\mathcal C_{i\setminus j} = \{{C \subseteq \U} \mid {i \in C}, {j \notin C}\}$. Then
\begin{align}
  &\PLC(\R_{i \setminus j}) = \sum_{\substack{r \in \R_{i \setminus j}\\C \subseteq \U}}  \PrC(C) \cdot \PL(r \mid C) \nonumber\\
 &=\sum_{\substack{r \in \R_{i \setminus j}\\C \subseteq \mathcal C_{i \cap j}}} \PrC(C) \cdot \PL(r \mid C)  + \sum_{\substack{r \in \R_{i \setminus j}\\C \subseteq \mathcal C_{i \setminus j}}} \PrC(C) \cdot \PL(r \mid C). 
\label{eq:expansion-of-PLC-Ri-j}
\end{align}
Now we'll do the same for $\PLC(\R_{j\setminus i})$. We can construct a bijection $\phi: \R_{i \setminus j} \rightarrow \R_{j \setminus i}$ by taking a ranking $r \in \R_{i \setminus j}$ and replacing $i$ with $j$. We can also construct a bijection $\psi : \mathcal C_{i \setminus j} \rightarrow \mathcal C_{j \setminus i}$ by taking a set $C \in \mathcal C_{i \setminus j}$ and replacing $i$ with $j$. Using these bijections, we can decompose $ \PLC(\R_{j\setminus i})$ into sums over the same sets as in \eqref{eq:expansion-of-PLC-Ri-j}. For clarity, we'll write $r_{i \rightarrow j} = \phi(r)$ and $C_{i \rightarrow j} = \psi(C)$. We then have:
\begin{align}
  \PLC(\R_{j \setminus i}) =& \sum_{\substack{r \in \R_{j \setminus i}\\C \subseteq \U}} \PrC(C) \cdot \PL(r \mid C) \nonumber\\
  ={}& \sum_{\substack{r \in \R_{i \setminus j}\\C \subseteq \U}} \PrC(C) \cdot \PL(r_{i \rightarrow j} \mid C) \nonumber\\
  ={}&\sum_{\substack{r \in \R_{i \setminus j}\\C \subseteq \mathcal C_{i \cap j}}} \PrC(C) \cdot \PL(r_{i \rightarrow j} \mid C) \quad+\quad \sum_{\substack{r \in \R_{i \setminus j}\\C \subseteq \mathcal C_{i \setminus j}}} \PrC(C_{i \rightarrow j}) \cdot \PL(r_{i \rightarrow j} \mid C_{i \rightarrow j}). \label{eq:expansion-of-PLC-Rj-i}
\end{align}
      Combining \eqref{ineq:ij-vs-ji} with \eqref{eq:expansion-of-PLC-Ri-j} and \eqref{eq:expansion-of-PLC-Rj-i}, we then have
\begin{align}
        &\sum_{\substack{r \in \R_{i \setminus j}\\C \subseteq \mathcal C_{i \cap j}}}  \PrC(C) \cdot \PL(r \mid C) + \sum_{\substack{r \in \R_{i \setminus j}\\C \subseteq \mathcal C_{i \setminus j}}} \PrC(C) \cdot \PL(r \mid C)\notag \\
\le{}& c \cdot \sum_{\substack{r \in \R_{i \setminus j}\\C \subseteq \mathcal C_{i \cap j}}} \PrC(C) \cdot \PL(r_{i \rightarrow j} \mid C) \quad+\quad c \cdot \sum_{\substack{r \in \R_{i \setminus j}\\C \subseteq \mathcal C_{i \setminus j}}} \PrC(C_{i \rightarrow j}) \cdot \PL(r_{i \rightarrow j} \mid C_{i \rightarrow j}). \label{ineq:ij-vs-ji-expanded}
      \end{align}

  Now, we show the first sum on the left-hand side of \eqref{ineq:ij-vs-ji-expanded} is larger than the first sum on the right-hand side (i.e., comparing the summations over $\mathcal C_{i \cap j}$, for the moment disregarding the factor of $c$). To do so, consider some fixed $r^* \in \R_{i \setminus j}$. Recall that $u_i > u_j$. We will show that, for all $C \subseteq \mathcal C_{i \cap j}$, we have ${\PL(r^* \mid C)} > {\PL(r^*_{i \rightarrow j} \mid C)}$. Suppose $i$ appears at position $h$ in $r^*$. We then have 
    \begin{align*}
      & \PL(r^* \mid C)\\
      &=\prod_{s = 1}^{h-1} \frac{\exp(u_{r_s^*})}{\sum_{t \in C \setminus \{r_1^*, \dots, r_{s-1}^*\}} \exp(u_{t})} 
      \cdot \frac{\exp(u_i)}{\sum_{t \in C \setminus \{r_1^*, \dots, r_{h-1}^*\}} \exp(u_{t})}
      \cdot \prod_{s = h+1}^{k} \frac{\exp(u_{r_s^*})}{\sum_{t \in C \setminus \{r_1^*, \dots, r_{s-1}^*\}} \exp(u_{t})}. 
    \end{align*}
     $\PL(r^*_{i \rightarrow j} \mid C)$ is identical, except the middle probability has numerator $\exp(u_j)$ instead of $\exp(u_i)$ and the denominators in the bottom product include $\exp(u_i)$ instead of $\exp(u_j)$. Since $u_i > u_j$, this means $\PL(r^* \mid C) > \PL(r^*_{i \rightarrow j} \mid C)$. Summing over all $r^*$ and $C$ then yields
     \begin{align}
       \sum_{\substack{r \in \R_{i \setminus j}\\C \subseteq \mathcal C_{i \cap j}}} \PrC(C) \cdot \PL(r \mid C) > \sum_{\substack{r \in \R_{i \setminus j}\\C \subseteq \mathcal C_{i \cap j}}}  \PrC(C) \cdot \PL(r_{i \rightarrow j} \mid C).\label{ineq:first-sum}
     \end{align}
     Inequality \eqref{ineq:first-sum} certainly still holds after scaling the right side by $c \le 1$. We can therefore remove these sums from \eqref{ineq:ij-vs-ji-expanded} and maintain the inequality, since we are shrinking the left by more than the right. This leaves us with
    \begin{align}
      \sum_{\substack{r \in \R_{i \setminus j}\\C \subseteq \mathcal C_{i \setminus j}}} \PrC(C) \cdot \PL(r \mid C) 
      \le c \cdot \sum_{\substack{r \in \R_{i \setminus j}\\C \subseteq \mathcal C_{i \setminus j}}} \PrC(C_{i \rightarrow j}) \cdot \PL(r_{i \rightarrow j} \mid C_{i \rightarrow j}).\label{ineq:ij-vs-ji-right-sum}
    \end{align}
  Next, notice that to transform $\PrC(C)$ into $\PrC(C_{i \rightarrow j})$, we don't consider $i$ and instead consider $j$:
     \begin{align} \label{eq:PrC(psiC)-vs-PrC(C)}
       \PrC(C_{i \rightarrow j}) &= \textstyle \PrC(C) \cdot \frac{p_j(1-p_i)}{p_i(1-p_j)}. 
     \end{align}
    Applying \eqref{eq:PrC(psiC)-vs-PrC(C)} to \eqref{ineq:ij-vs-ji-right-sum} and rearranging the sums yields
  \begin{align}
      &\sum_{C \subseteq \mathcal C_{i \setminus j}} \Big[\PrC(C)  \sum_{r \in \R_{i \setminus j}} \PL(r \mid C) \Big] \notag \\
      \le{}& c\cdot \sum_{C \subseteq \mathcal C_{i \setminus j}} \Big[ \PrC(C_{i \rightarrow j}) \sum_{r \in \R_{i \setminus j}}\PL(r_{i \rightarrow j} \mid C_{i \rightarrow j}) \Big] \notag \\
      ={}& c\cdot {\textstyle \frac{p_j(1-p_i)}{p_i(1-p_j)}} \cdot \sum_{C \subseteq \mathcal C_{i \setminus j}} \Big[\PrC(C) \sum_{r \in \R_{i \setminus j}}\PL(r_{i \rightarrow j} \mid C_{i \rightarrow j})\Big]. \label{ineq:extract-ps}
    \end{align}
     Now, notice that $\sum_{r \in \R_{i \setminus j}}  \PL(r \mid C)$ is the probability that $i$ is ranked in the top $\ell$ positions under Plackett--Luce with consideration set $C$ (since $j\notin C$). Similarly, $\sum_{r \in \R_{i \setminus j}} \PL(r_{i \rightarrow j} \mid C_{i \rightarrow j})$ is the probability that $j$ is ranked in the top $\ell$ positions with consideration set $C_{i \rightarrow j}$, which is just $C$ with $i$ replaced by $j$. By \Cref{lemma:pl-top-l}, we therefore have $\sum_{r \in \R_{i \setminus j}}  \PL(r \mid C) > \sum_{r \in \R_{i \setminus j}} \PL(r_{i \rightarrow j} \mid C_{i \rightarrow j})$.
     
  We can therefore increase the right hand side of \eqref{ineq:extract-ps} to find
  \begin{align*}
    &\sum_{C \subseteq \mathcal C_{i \setminus j}} \Big[ \PrC(C)  \sum_{r \in \R_{i \setminus j}} \PL(r \mid C) \Big] \\
    \le{}&  c\cdot {\textstyle \frac{p_j(1-p_i)}{p_i(1-p_j)}} \cdot \sum_{C \subseteq \mathcal C_{i \setminus j}} \Big[ \PrC(C) \sum_{r \in \R_{i \setminus j}} \PL(r \mid C) \Big] .
  \end{align*}
    Dividing both sides by the left-hand side then yields $1 \le c \cdot \smash{\frac{p_j(1-p_i)}{p_i(1-p_j)}}$, or, rearranging, the desired relationship \eqref{eq:mult-gap}:
    \begin{align*}
      \textstyle
      \frac{p_i}{1-p_i} \le c \cdot \frac{p_j}{1-p_j}. \tag*{\qedhere}
    \end{align*} 
    \end{proof}
    
  \InferRelativeUtility*
  \begin{proof}
    Consider the set of top-$k$ rankings containing both $i$ and $j$, denoted $\mathcal R_{i \cap j}$. This set of rankings can be partitioned into two equal-size subsets, one in which $i$ is ranked above $j$ and the other in which $j$ is ranked above $i$. Moreover, there is a bijection between these sets of rankings defined by swapping the position of $i$ and $j$. Now suppose (towards proving the contrapositive of the claim) that $u_j \ge u_i$. In each pair of rankings defined by the above bijection, the one with $j$ ranked higher has higher probability under Placett--Luce. To see this, note that in the Plackett--Luce probabilities (\Cref{eq:pl}) of two corresponding rankings, swapping $i$ and $j$ only affects the ranking probability in the denominator (since the product of numerators commutes). The ranking with $j$ ranked higher has strictly smaller denominators in its Plackett--Luce probability for the choices between $j$ and $i$'s position (including $i$'s position, but not $j$'s), since the higher-utility item is removed from the set of options after it is chosen. Thus it is more likely under Plackett--Luce (or at least as likely in the case that $u_i =u_j$). We then have that in the bijection between the parts of $\mathcal R_{i \cap j}$ with $j$ and $i$ ranked higher, the set of rankings with $j$ ranked higher are more probable under Plackett--Luce \emph{with consideration} (for every feasible consideration set, the ranking with $j$ ranked higher is more probable). Thus, $\Pr(j \succ_R i \mid i, j \in R) \ge 1/2$. Taking the contrapositive, we find that if $\Pr(j \succ_R i \mid i, j \in R) < 1/2$ (or equivalently, $\Pr(i \succ_R j \mid i, j \in R) > 1/2$), then $u_i > u_j$.  
  \end{proof}

\Chernoff*

\begin{proof}
By hypothesis, $\E[X] = \sum_{i \in \U} p_i \ge \alpha k$. Thus,
    \begin{align*}
    \Pr(X \le k) &= \Pr(X \le (1/\alpha) \alpha k)\\
    &\le \Pr\left(X \le (1/\alpha) E[X]\right)\tag{since $E[X] \ge \alpha k$}\\
    &=  \Pr\left(X \le (1 - (1 - 1/\alpha)) E[X]\right)\\
    &\le \left(\frac{e^{-(1-1/\alpha)}}{(1/\alpha)^{1/\alpha}}\right)^{E[X]} \tag{Chernoff bound~\citep{mitzenmacher2017probability}}\\
    &\le \left(\frac{e^{-(1-1/\alpha)}}{(1/\alpha)^{1/\alpha}}\right)^{\alpha k}  \tag{$E[X] \ge \alpha k$ and the ratio is $\mathop{\le}1$}\\
    &= \left(\frac{e^{-\alpha+1}}{1/\alpha}\right)^{ k}\\
    &= \left(\alpha e^{1-\alpha}\right)^{ k}. \tag*{\qedhere}
  \end{align*}
\end{proof}

\ConsiderInitialLB*

\begin{proof}
  Consider a particular instance where a chooser formed a consideration set. Let $X = \sum_{i \in \U} X_i$ be the random variable representing the size of the consideration set, summing over indicator random variables (with $X_i$ denoting ``was item $i$ considered?''). We only observe a ranking from this instance if $X \ge k$. We can bound the probability that the consideration set meets this constraint:
  \begin{align}
    \Pr(X < k) &\le \Pr(X \le k) \le \left(\alpha e^{1-\alpha}\right)^{ k} \tag{by \Cref{lemma:chernoff}}, \nonumber
  \intertext{and therefore}
    \Pr(X \ge k) &= 1 - \Pr(X < k) \ge 1 - \left(\alpha e^{1-\alpha}\right)^{ k}. \label{eq:ast}
  \end{align}
  Also, note that if we observe $i$ in a length-$k$ ranking, it must have been the case that $X_i = 1$, and we have conditioned on $X \ge k$. Thus,
  \begin{align}
    \PLC(\R_{i \le k}) \le \Pr(X_i \mid X \ge k). \label{eq:dagger}
  \end{align}
  We can use this to lower bound $p_i $:
  \begin{align*}
    p_i &= \Pr(X_i)\\
    &= \Pr(X_i \mid X \ge k) \cdot \Pr(X \ge k) + \Pr(X_i \mid X < k) \cdot \Pr(X < k) \\
     &\ge \Pr(X_i \mid X \ge k) \cdot \Pr(X \ge k)\\
     &\ge \Pr(X_i \mid X \ge k) \cdot \left[1- \left(\alpha e^{1-\alpha}\right)^{ k}\right] \tag*{by \eqref{eq:ast}}\\
     &\ge \PLC(\R_{i \le k}) \cdot \left[1- \left(\alpha e^{1-\alpha}\right)^{ k}\right], \tag*{by \eqref{eq:dagger}}
  \end{align*}
  just as desired.
\end{proof}

\ConsiderExactlyK*
\begin{proof}
  Let $X$ be the random variable representing the size of the consideration set. Then we can write
  \begin{align*}
    \sum_{C \subseteq \U, |C| = k} \PrC(C) &= \Pr(X = k \mid X \ge k)\\
    &= \frac{\Pr (X = k)}{\Pr(X \ge k)}\\
    &\le \frac{\Pr (X \le  k)}{\Pr(X > k)}\\
    &= \frac{\Pr (X \le  k)}{1 - \Pr (X \le  k)}\\
    &\le \textstyle \frac{\left(\alpha e^{1-\alpha}\right)^{ k}}{1 - \left(\alpha e^{1-\alpha}\right)^{ k}},
  \end{align*}
where the last inequality follows by \Cref{lemma:chernoff}.
\end{proof}

\IncreasePj*
\begin{proof}
We will be considering the probability that item $i$ is chosen first from a particular consideration set $C$. For brevity, define 
\begin{align*}
\Pr(i \mid C) &= \sum_{r \in \R_{i = 1}} \PL(r \mid C),
\end{align*}
summing over all rankings in which $i$ is chosen first.  (We could also have defined $\Pr(i \mid C)$ as $\exp(u_i) / \sum_{h \in C}\exp(u_h)$, by definition.)
  
First, we'll shrink $\PLC(\R_{i = 1})$ so it only involves a sum over certain consideration sets (excluding those where $|C \cup \set{j}| = k$):
\begin{equation}
  \begin{aligned}[b]
  \PLC(\R_{i = 1}) &= \sum_{r \in \R_{i = 1}} \sum_{C \subseteq \U} \PrC(C) \cdot \PL(r \mid C)\\
  &= \sum_{C \subseteq \U} \Big[\PrC(C) \sum_{r \in \R_{i = 1}} \PL(r \mid C)\Big]\\
  &= \sum_{C \subseteq \U} \PrC(C) \cdot \Pr(i \mid C)\\
  &\ge 
  \sum_{\substack{C \subseteq \U\\|C \cup \{j\}| > k\\i \in C}} \PrC(C) \cdot \Pr(i \mid C).
  \end{aligned}
  \label{eq:ub:decompose-pr[r=1]-into-summation}
\end{equation}
We partition the consideration sets in this summation into those that contain $j$ and those that don't, using the notation $\mathcal C_{i\cap j} = \set{{C \subseteq \U} \mid {i \in C}, {j \in C}}$ and $\mathcal C_{i\setminus j} = \set{{C \subseteq \U} \mid {i \in C}, {j \notin C}}$.  It will be helpful to pair each consideration set containing $j$ with its corresponding set excluding $j$, by defining $\phi: \mathcal C_{i \cap j} \rightarrow \mathcal C_{i \setminus j}$ where $\phi(C)= C \setminus \{j\}$. We can then rewrite the summation in \eqref{eq:ub:decompose-pr[r=1]-into-summation} as
\begin{align}
   &\sum_{\substack{C \subseteq \U\\|C \cup \{j\}| > k\\i \in C}} \PrC(C) \cdot \Pr(i \mid C) \nonumber\\
   ={}& \sum_{\substack{C \in \mathcal C_{i \cap j} \\|C| > k}} \PrC(C) \cdot \Pr(i \mid C) + \sum_{\substack{C \in \mathcal C_{i \setminus j}\\|C \cup \{j\}| > k}} \PrC(C) \cdot \Pr(i \mid C) \nonumber\\
   ={}& \sum_{\substack{C \in \mathcal C_{i \cap j} \\|C| > k}} \left[\strut \PrC(C) \cdot\Pr(i \mid C) + \PrC(\phi(C)) \cdot \Pr(i \mid \phi(C)) \right].
  \label{eq:ub:combine-summation-using-phi}
\end{align}
  
Let $z_{k, p'}$ be the normalization constant in $\PrC$ after replacing $p_j$ with $p_j' > p_j$. Because $z_{k, p}$ denotes the probability that at least $k$ items are considered, we have $z_{k, p'} \ge z_{k, p}$. For any $C \in \mathcal C_{i \cap j}$ with $|C| > k$ (and note that this latter restriction is important for \eqref{eq:ub:lb-on-prc(phi(C))}, below: if $|C| \le k$, then $\PrC(\phi(C)) = 0$), we then have
\begin{align}
\PrC(C) 
&= \textstyle \frac{1}{z_{k, p}} \cdot \prod_{s \in C} p_s \cdot \prod_{t \in \mathcal{U}\setminus C} (1-p_t) \nonumber \\
&= \textstyle \frac{1}{z_{k, p}} \cdot p_j \cdot \prod_{s \in \phi(C)} p_s \cdot \prod_{t \in \mathcal{U}\setminus C} (1-p_t) \nonumber \\
  &\ge \textstyle \frac{1}{z_{k, p'}} \cdot p_j \cdot \prod_{s \in \phi(C)} p_s \cdot \prod_{t \in \mathcal{U}\setminus C} (1-p_t)
\label{eq:ub:lb-on-prc(C)}
\intertext{and, similarly,}
\PrC(\phi(C)) 
&= \textstyle \frac{1}{z_{k, p}} \cdot \prod_{s \in \phi(C)} p_s \cdot \prod_{t \in \mathcal{U}\setminus \phi(C)} (1-p_t) \nonumber \\
&= \textstyle \frac{1}{z_{k, p}} \cdot (1-p_j)  \cdot \prod_{s \in \phi(C)} p_s  \cdot \prod_{t \in \mathcal{U}\setminus C} (1-p_t) \nonumber\\
&\ge \textstyle \frac{1}{z_{k, p'}} \cdot (1-p_j)  \cdot \prod_{s \in \phi(C)} p_s  \cdot \prod_{t \in \mathcal{U}\setminus C} (1-p_t).
\label{eq:ub:lb-on-prc(phi(C))}    
\end{align}
Combining \eqref{eq:ub:lb-on-prc(C)} and \eqref{eq:ub:lb-on-prc(phi(C))}, still for $C \in \mathcal C_{i \cap j}$ with $|C| > k$, we have
\begin{equation}
  \begin{aligned}[b]
&\PrC(C) \cdot \Pr(i \mid C)  + \PrC(\phi(C))  \cdot \Pr(i \mid \phi(C)) \\
\ge {} & \textstyle \Pr(i \mid C) \cdot \frac{1}{z_{k, p'}} \cdot p_j \cdot \prod_{s \in \phi(C)} p_s \cdot \prod_{t \in \mathcal{U}\setminus C} (1-p_t)
  \\
& + \textstyle \Pr(i \mid \phi(C)) \cdot \frac{1}{z_{k, p'}} \cdot (1-p_j) \cdot \prod_{s \in \phi(C)} p_s \cdot \prod_{t \in \mathcal{U}\setminus C} (1-p_t).
\end{aligned}
\label{eq:ub:lb-on-guts-of-summation}
\end{equation}
Now note that \eqref{eq:ub:lb-on-guts-of-summation} has the form $p_j\cdot a + (1-p_j) \cdot b$, where $a < b$ (since $\Pr(i \mid C) < \Pr(i \mid \phi(C))$). Thus, when we replace $p_j$ with $p_j' > p_j$, the weighted average shrinks: we place more weight on the smaller term and less weight on the larger one. Using $\PrC'$ to denote consideration probabilities with $p_j$ replaced by $p_j'$, and making use of \eqref{eq:ub:decompose-pr[r=1]-into-summation} and \eqref{eq:ub:combine-summation-using-phi}, we then know:
\begin{equation}
  \begin{aligned}[b]
    &\PLC(\R_{i = 1})\\
    &\ge \sum_{\substack{C \in \mathcal C_{i \cap j} \\|C| > k}} \left[\PrC(C) \cdot \Pr(i \mid C) + \PrC(\phi(C)) \cdot \Pr(i \mid \phi(C)) \right]\\
    &\ge \sum_{\substack{C \in \mathcal C_{i \cap j} \\|C| > k}} \left[\PrC'(C) \cdot \Pr(i \mid C) + \PrC'(\phi(C)) \cdot \Pr(i \mid \phi(C)) \right]\\
    &= \sum_{\substack{C \subseteq \U\\|C \cup \{j\}| > k\\i \in C}} \PrC'(C) \cdot \Pr(i \mid C).
  \end{aligned}
  \label{eq:ub:assembled-lb-on-pr[r=1]}
\end{equation}
  Finally, we add the additive error term to recover the consideration sets missing from \eqref{eq:ub:assembled-lb-on-pr[r=1]}---those containing $j$ and with size exactly $k$---and arrive at $\PLC'(\R_{i = 1})$. Recall that $\sum_{i \in \U} p_i \ge \alpha k$; this inequality is still true after replacing $p_j$ with $p_j' > p_j$, so by \Cref{lemma:consider-exactly-k-bound}
  \begin{align}
    \sum_{C \subseteq \U, |C| = k} \PrC'(C) \le {\textstyle \frac{\left(\alpha e^{1-\alpha}\right)^{ k}}{1 - \left(\alpha e^{1-\alpha}\right)^{ k}}}.
    \label{eq:ub:missing-sets-are-rare}
  \end{align}
  Thus, by \eqref{eq:ub:assembled-lb-on-pr[r=1]} and \eqref{eq:ub:missing-sets-are-rare}, we have
  \begin{align*}
    & \PLC'(\R_{i = 1}) \\
    &= \sum_{\substack{C \subseteq \U\\|C \cup \{j\}| > k\\i \in C}} \PrC'(C) \cdot \Pr(i \mid C)  +  \sum_{\substack{C \subseteq \U\\|C| = k\\j \in C}} \PrC'(C) \cdot \Pr(i \mid C)\\
    &\le \sum_{\substack{C \subseteq \U\\|C \cup \{j\}| > k\\i \in C}} \PrC'(C) \cdot \Pr(i \mid C) +  \sum_{\substack{C \subseteq \U\\|C| = k}} \PrC'(C)\\
    &\le \sum_{\substack{C \subseteq \U\\|C \cup \{j\}| > k\\i \in C}} \PrC'(C) \cdot \Pr(i \mid C) +  {\textstyle \frac{\left(\alpha e^{1-\alpha}\right)^{ k}}{1 - \left(\alpha e^{1-\alpha}\right)^{ k}}}\\
    &\le \PLC(\R_{i = 1}) + {\textstyle \frac{\left(\alpha e^{1-\alpha}\right)^{ k}}{1 - \left(\alpha e^{1-\alpha}\right)^{ k}}},
  \end{align*}
  which is exactly the desired inequality. Finally, if at least $k+1$ of the consideration probabilities (including $p_j'$) are 1, then the sum in \eqref{eq:ub:missing-sets-are-rare} equals 0, since we can never have a size-$k$ consideration set. The same argument as above---identical up to \eqref{eq:ub:assembled-lb-on-pr[r=1]}, and with the tightened \eqref{eq:ub:missing-sets-are-rare}---then yields the inequality with no error term.
\end{proof}

\UBRepeated*

\begin{proof}
  We'll apply \Cref{lemma:increase-pj} to $\PLC(\R_{i=1})$ $n-1$ times, each time increasing $p_j$ to $1$ (for every $j \ne i$). Note that since we only increase consideration probabilities, $\alpha k$ remains a valid lower bound on $\sum_{i \in \U}p_i$. The first $k$ times we do this, we incur the additive error from \Cref{lemma:increase-pj}. After $k$ consideration probabilities have been increased to 1, we no longer incur additional error from subsequent increases. Thus, using $\PLC'(\R_{i=1})$ to denote the probability that $i$ is ranked first if all consideration probabilities except $p_i$ are 1,
\begin{align}
  \PLC'(\R_{i=1}) &\le \PLC(\R_{i=1}) + k {\textstyle \frac{\left(\alpha e^{1-\alpha}\right)^{ k}}{1 - \left(\alpha e^{1-\alpha}\right)^{ k}}}.
  \label{eq:i-wins-when-everything-else-always-considered}
\end{align}
Now, we can simplify $\PLC'(\R_{i=1})$, since $\U$ (occurring w.p.\ $p_i$) is the only feasible consideration set that includes $i$, since all other consideration probabilities equal 1:
\begin{align*}
  \PLC'(\R_{i=1}) = \sum_{r\in\R_{i=1}}p_i\cdot\PL(r\mid \U)
  &= p_i\sum_{r\in\R_{i=1}}\PL(r\mid \U)\\
  &= p_i \cdot  \frac{\exp(u_i)}{\sum_{j \in \U}\exp(u_j)}.
\end{align*}
We can combine this with \eqref{eq:i-wins-when-everything-else-always-considered} and solve for $p_i$:
\begin{align*}
  &p_i \cdot  \frac{\exp(u_i)}{\sum_{j \in \U}\exp(u_j)} \le \PLC(\R_{i=1}) + k {\textstyle \frac{\left(\alpha e^{1-\alpha}\right)^{ k}}{1 - \left(\alpha e^{1-\alpha}\right)^{ k}}}\\
\Rightarrow{}& p_i \le \frac{\sum_{j \in \U}\exp(u_j)}{\exp(u_i)} \cdot \left(\PLC(\R_{i=1}) + k {\textstyle \frac{\left(\alpha e^{1-\alpha}\right)^{ k}}{1 - \left(\alpha e^{1-\alpha}\right)^{ k}}}\right). \tag*{\qedhere}
\end{align*}
\end{proof}

\subsection{Algorithms for Lower and Upper Bounds}
\label{sect:supplement:algorithms}

Here is the deferred proof of \Cref{thm:lower-bound-alg-correctness}, and the omitted upper-bound algorithm (and its proof of correctness).

\LBcorrectness*
\begin{proof}
  We begin with correctness. We'll show that $b_i \le p_i$ is an invariant of the nested for loop for every $i \in \U$. First, the initial values of $b_i$ satisfy $b_i \le p_i$ by \Cref{thm:consideration-initial-LB}. Now, suppose inductively that $b_j \le p_j$ for all $j \in \U$ before an iteration of the innermost loop. We only update $b_j$ if $c = \PLC(\R_{i \le \ell}) / \PLC(\R_{j \le \ell})$. Additionally, $j$ must be an outneighbor of $i$ in $G$. By definition of $G$, we then have $u_i > u_j$. Thus, in any case where $b_j$ is updated, we know from \Cref{thm:consider-gap} that $\smash{\frac{p_i}{1 - p_i}} \le c \cdot \smash{\frac{p_j}{1-p_j}}$. We can rewrite this as a lower bound on $p_j$, as in \eqref{eq:lower-bound-on-pj}: $$ p_j \ge \frac{p_i}{c - cp_i + p_i} .$$ The right side of this lower bound is an increasing function of $p_i$ for $p_i, c \in (0, 1]$, so we can replace $p_i$ with a smaller value (namely, $b_i$, which is smaller by inductive hypothesis) and maintain the inequality. Thus, we know $\smash{p_j \ge \frac{b_i}{c - cb_i + b_i}}$. Therefore, if we update $b_j$ to be equal to this lower bound in the loop, we still maintain the invariant $b_j \le p_j$. By induction, we then have $b_i \le p_i$ for all $i \in \U$ when the algorithm terminates.
  
 Concerning the runtime, initializing the lower bounds takes time $O(n)$ and constructing $G$ takes times $O(k n^2)$ (by looping through all pairs of items and all $k$ values of $\ell$). Additionally, topologically sorting $G$ takes time $O(n^2)$ and the nested loop takes time $O(kn^2)$, for a total running time of $O(kn^2)$. 
\end{proof}

\begin{theorem}
  Suppose $\sum_{i \in \U} p_i \ge  \alpha k $. \Cref{alg:upper-bounds} returns upper bounds $b_i$ for each $i \in \U$ such that $p_i \le b_i$ in time $O(kn^2)$.
\end{theorem}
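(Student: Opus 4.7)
The plan is to mirror the correctness argument for \Cref{thm:lower-bound-alg-correctness} almost verbatim, flipping every inequality and every direction of propagation. The invariant I would maintain throughout the nested for loop is that $b_i \ge p_i$ for every $i \in \U$. The base case is immediate: the initial values $b_i$ set from \Cref{thm:upper-bound:repeated-applications} satisfy $b_i \ge p_i$ by that theorem (this is where the hypothesis $\sum_i p_i \ge \alpha k$ is used to bound the contribution of size-$k$ consideration sets).

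For the inductive step, I would assume $b_j \ge p_j$ for all $j$ prior to an iteration of the inner loop. Each propagation update considers an edge $\langle j, i \rangle$ of the upper-bound DAG, which (by the mirror of the lower-bound construction) exists precisely when $u_i > u_j$ and $c = \PLC(\R_{i \le \ell})/\PLC(\R_{j \le \ell}) \le 1$ for some $\ell$. Under these conditions, \Cref{thm:consider-gap} gives the upper-bound form \eqref{eq:upper-bound-on-pi}, namely $p_i \le \frac{cp_j}{1-p_j+cp_j}$. The crucial technical step is to verify that the function $f(x) = \frac{cx}{1-x+cx}$ is nondecreasing in $x$ on $[0,1)$ for $c \in (0,1]$; a one-line derivative computation yields $f'(x) = c/(1-x+cx)^2 \ge 0$. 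Consequently, substituting the (inductively valid) upper bound $b_j \ge p_j$ preserves the inequality: $p_i \le \frac{cb_j}{1-b_j+cb_j}$. Since the update takes the minimum of the current $b_i$ (which was $\ge p_i$) and this new expression, the invariant $b_i \ge p_i$ is preserved.

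I would also briefly justify that the upper-bound DAG is acyclic: edges now point from lower- to higher-utility items, so utilities strictly increase along every edge, precluding cycles; hence a topological sort exists and processes each item only after all items that can tighten its upper bound have been finalized. For the runtime, the analysis is identical to that of \Cref{alg:lower-bounds}: initializing the $b_i$ is $O(n)$, constructing the edge set $E$ by scanning all pairs $(i,j)$ and all $\ell \in \{1,\ldots,k\}$ is $O(kn^2)$, the topological sort is $O(n^2)$, and the triple-nested propagation loop is $O(kn^2)$, giving $O(kn^2)$ overall.

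The only step that is not purely mechanical is the monotonicity check for $f$, which is the main obstacle --- the corresponding check in the lower-bound proof worked because $x/(c - cx + x)$ is increasing in $x$, and here one must verify that the dual expression is likewise monotone so that the inductive substitution goes through in the correct direction. Once this is in hand, the rest of the proof is a straightforward rewriting of the lower-bound argument with $\le$ and $\ge$ swapped and $\max$ replaced by $\min$.
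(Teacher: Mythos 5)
Your proposal is correct and follows essentially the same route as the paper's proof: initialize with Theorem~\ref{thm:upper-bound:repeated-applications}, propagate via the upper-bound form \eqref{eq:upper-bound-on-pi} of Theorem~\ref{thm:consider-gap}, and use the monotonicity of $x \mapsto \frac{cx}{1-x+cx}$ to justify substituting $b_j \ge p_j$, with the same $O(kn^2)$ runtime accounting. The only difference is that you spell out the derivative computation and the acyclicity of the reversed DAG, which the paper leaves implicit.
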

\begin{proof}
The analysis is nearly identical to \Cref{alg:lower-bounds}, so we only highlight the differences. First, the initial values of $b_i$ are valid upper bounds by \Cref{thm:upper-bound:repeated-applications}. Then, we use \Cref{thm:consider-gap} in its upper bound form \eqref{eq:upper-bound-on-pi}: $\smash{p_i \le \frac{c p_j}{1 - p_j + cp_j}}$. This bound is an increasing function of $p_j$, so we can replace $p_j$ by a larger quantity (namely, an upper bound $b_j \ge p_j$) and maintain the inequality. Whenever we update $b_i$, the conditions of \Cref{thm:consider-gap} are satisfied, so $b_i$ remains a valid upper bound on $p_i$. Correctness then follows by induction and the runtime analysis is the same as for \Cref{alg:lower-bounds}.
\end{proof}

\begin{algorithm}[tb]
   \caption{Upper-bounding consideration probabilities.}
   \label{alg:upper-bounds}
\begin{algorithmic}[1]
\algsetup{
  indent=1.5em,
  linenosize=\scriptsize}
    \REQUIRE items $\U$, (estimates of) utilities $u_i$, (estimates of) top-$\ell$ ranking probabilities $\PLC(\R_{i \le \ell})$, lower bound $\alpha k$ on expected consideration set size ($\alpha > 1$)
    \STATE $b_i \gets \frac{\sum_{j \in \U}\exp(u_j)}{\exp(u_i)} \cdot \left(\PLC(\R_{i=1}) + {\textstyle \frac{k \left(\alpha e^{1-\alpha}\right)^{ k}}{1 - \left(\alpha e^{1-\alpha}\right)^{ k}}}\right) $ for all $i \in \U$  
    \STATE $E\gets \set{\vcenter{\hbox{$\displaystyle \tup{j, i} \subset \U \times \U \mid
            u_i > u_j \text{ and } 
            \exists \ell : \PLC(\R_{i \le \ell}) < \PLC(\R_{j \le \ell})
      $}}}$
    \STATE $G \gets \tup{\U, E}$
    \FORALL {$j$ in a topological sort of $G$}
      \FORALL{out-neighbors $i$ of $j$ in $G$}
        \FOR{$\ell  = 1, \dots, k$}
        \STATE $c \gets \PLC(\R_{i \le \ell}) / \PLC(\R_{j \le \ell})$
          \IF {$c \le 1$}
            \STATE $b_i \gets \min\left\{b_i, \frac{cb_j}{1 - b_j + cb_j}\right\}$
          \ENDIF
        \ENDFOR
      \ENDFOR
    \ENDFOR
    \RETURN $b_i$ for each $i \in \U$ such that $p_i \le b_i$
  \end{algorithmic}
\end{algorithm}

\subsection{Algorithms for approximating PL+C probabilities}
\label{app:approx-alg}

\begin{algorithm}[h]
  \caption{Probably approximately computing PL+C probabilities.}
  \label{alg:probapprox-plc}
\begin{algorithmic}[1]
\algsetup{
 indent=1.5em,
 linenosize=\scriptsize}
   \REQUIRE ranking $r$, utilities $u_i > 0$ and consideration probabilities $p_i$ for all $i \in \U$, error tolerance $\epsilon$, failure probability $\delta$
   \STATE $\hat R \gets 0$
   \STATE $s \gets \lceil\log(4/\delta)/(2\epsilon^2) \rceil$
   \FOR{$s$ iterations}
     \STATE $C \gets \emptyset$
     \WHILE{$|C| < k$}
        \STATE sample consideration set $C$ according to consideration probabilities $p_i$
     \ENDWHILE
     \STATE $\hat R \gets \hat R + \PL(r | C)$
   \ENDFOR
   \STATE $\hat R \gets \hat R / s$
   \RETURN $\hat R$
 \end{algorithmic}
\end{algorithm}

\ProbApproxalg*
\begin{proof}
  Let $R$ be the random variable that takes value $\PL(r|C)$ with probability $\PrC(C).$ It follows by \eqref{eq:PLC} that $\PLC(r) = \E[R].$ Let $\hat R = \frac{1}{s} \sum_{i = 1}^s R$ be the empirical mean of $s$ i.i.d.~samples from $R$. As $R\in[0,1]$ since it outputs probabilities, by Hoeffding's inequality~\citep{mitzenmacher2017probability}, we get
  \begin{align*}
    \Pr(|\hat R - \PLC(r)| \ge \epsilon) \le 2\exp(-2s\epsilon^2).
  \end{align*}
  Setting $\delta/2 = 2\exp(-2s\epsilon^2)$ and solving for $s$, we find that we need $s = \log(4/\delta)/(2\epsilon^2) = O(\log(1/\delta)/\epsilon^2)$ samples from $R$ to estimate $\PLC(r)$ with $\epsilon$-additive error and failure probability at most $\delta/2$.

In order to sample from $R$, \Cref{alg:probapprox-plc} uses rejection sampling to discard cases where $|C| < k$. We can bound the number of consideration sets $T$ we need to sample to get $s$ consideration sets of size at least $k$---and therefore $s$ useful samples from $R$. The probability of drawing a sufficiently large consideration set is exactly $z_{k, p}$, so $T$ is distributed according to a negative binomial distribution (counting the total number of trials, not just failures), namely $T\sim \text{NB}(s, z_{k, p})$, which has expectation $s /z_{k, p}$.  Using a tail bound for this type of negative binomial distribution~\citep{brown2011wasted}, we have that for any $c > 1$, 
  \begin{align*}
    \Pr(T > c\E[T]) \le \exp(-cs(1-1/c)^2/2).
  \end{align*}
  We want to find a $c$ such that this failure probability is at most $\delta / 2$, since then if we run $cs/z_{k,p}$ times, we will only fail to get $s$ successes with probability at most $\delta / 2$. Setting $\delta/2 \ge \exp(-cs(1-1/c)^2/2)$ and solving for $c$, we find that we need
  \begin{align*}
    &c+1/c \ge \frac{2\log(2/\delta)}{s}+2.
  \end{align*}
  Thus $c = \frac{2\log(2/\delta)}{s}+2$ suffices. That is, if we sample $\left(\frac{2\log(2/\delta)}{s}+2\right)s / z_{k,p}= O((\log(1/\delta)+s) / z_{k,p}) $ sets, then we will have at least $s$ sufficiently large sets with probability at least $1-\delta/2$. We can plug in our previously found value of $s$ to find that the total number of iterations we need is $O((\log(1/\delta)+\log(1/\delta)/\epsilon^2) / z_{k,p}) = O(\log(1/\delta)/(z_{k,p}\epsilon^2))$.

 By the union bound, one or both of the approximation and runtime guarantees fail with probability at most $\delta$ (either one fails w.p.\ at most $\delta / 2$), so our overall success probability is at least $1-\delta$.
  
  Lastly, it takes $O(n)$ time to sample one consideration set, and $O(kn)$ time to compute $\PL(r|C)$ from one consideration set $C$, so the overall runtime (with probability at least $1-\delta$) is $O(kn\log(1/\delta)/(z_{k,p}\epsilon^2))$.
\end{proof}

Next, we provide our deterministic algorithm for approximating PL+C probabilities. The algorithm clusters possible consideration sets into bins according to their total exp-utility and stores the total consideration probability of the sets in each bin (without storing the sets themselves). These sets are built up item-by-item, each time adding the consideration probability of new sets to the appropriate bin. At the end of the algorithm, we know the approximate exp-utility of sets in each bin and their exact consideration probabilities. This allows us to estimate the PL+C probability of a ranking $r$. Since we know all items in $r$ must have been considered, the algorithm only builds up consideration sets over the items $\overline r = \U \setminus r$. Finally, we note that the normalizing constant $z_{k, p}$ can actually be computed efficiently. The \textsf{DC} (``direct computation'') algorithm~\citep{biscarri2018simple} we use as a subroutine in \Cref{alg:approx-plc} computes the full PMF of the Poisson binomial distribution in time $O(n^2)$, which we can sum over to get $z_{k, p}$.

\begin{algorithm}[h!]
   \caption{Approximately computing PL+C probabilities.}
   \label{alg:approx-plc}
\begin{algorithmic}[1]
\algsetup{
  indent=1.5em,
  linenosize=\scriptsize}
    \REQUIRE ranking $r$, utilities $u_i > 0$ and consideration probs.\ $p_i$ for all $i \in \U$, error tolerance $\epsilon$
    \IF{$k = n$}
      \RETURN $\PL(r \mid \U)$
    \ENDIF
    \STATE $\delta \gets \epsilon / (2kn)$
    \STATE $\overline r \gets \U \setminus r$
    \STATE $s \gets \lfloor \log_{1 + \delta}\left(\sum_{i \in \overline r}e^{u_i}\right) \rfloor$
    \STATE $A\gets $ array of size $s + 1$ indexed $-1, 0, \dots, s$
    \STATE $A[-1] \gets (1, 0)$
    \STATE $A[0, \dots, s] \gets (0, 0)$
    \STATE $A' \gets A$
    \FORALL{$i \in \overline r$}
      \STATE $A'[j] \gets (A[j][0] \cdot (1-p_i), A[j][1])$ (for every $j = -1, 0, \dots, s$)
      \FOR{$j = -1, \dots, s$}
        \IF{$A[j][0] > 0$}
          \STATE $h_j \gets \lfloor \log_{1+\delta}(A[j][1] + e^{u_i})\rfloor$
          \STATE $A'[h_j] \gets (A'[h_j][0] +  A[j][0] \cdot p_i, \max\{A'[h_j][1], A[j][1] + e^{u_i}\})$
        \ENDIF
      \ENDFOR
      \STATE $A \gets A'$
    \ENDFOR
    \STATE $z_{k, p} \gets \sum_{i = k}^n \textsf{DC}(p_1, \dots, p_n)[i]$
    \RETURN $\frac{\prod_{i \in r} p_i}{z_{k,p}} \sum_{j = -1}^s A[j][0] \cdot \prod_{i =1}^k \frac{e^{u_i}}{(1+\delta)^{j} + \sum_{a \in \{r_{i}, \dots, r_k\}} e^{u_{a}}}$
  \end{algorithmic}
\end{algorithm}

\begin{lemma}\label{lemma:approx-alg}
  After processing items $\{1, \dots, i\}\subseteq \overline r$ in the outer for loop, there is a partition $\{C_{i, -1}, \dots, C_{i, s}\}$  of $\mathcal P(\{1, \dots, i\})$ such that:
  \begin{enumerate}
    \item $C_{i, -1} = \{\emptyset\}$.
    \item For all $-1 \le j \le s$,  $A[j][0] = \sum_{C \in C_{i,j}}\left(\prod_{a \in C}p_a\right) \left(\prod_{b \in \{1, \dots, i\}\setminus C}(1-p_b)\right)$. 
    \item For every $0 \le j \le s$, every $C \in C_{i,j}$ satisfies $(1+\delta)^{j - i}\le \sum_{a \in C}e^{u_a} < (1+\delta)^{j + 1}$. 
    \item  $A[j][1] = \max_{C \in C_{i, j}}\sum_{a \in C}e^{u_a}$.
  \end{enumerate}
  \end{lemma}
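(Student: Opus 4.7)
The plan is to prove all four conditions simultaneously by induction on $i$, the number of items from $\overline r$ that have been processed in the outer for loop. This is natural since the algorithm modifies $A$ one item at a time, and the partition $\{C_{i,j}\}$ is built up incrementally.

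For the base case $i = 0$ (prior to entering the loop), we have $\mathcal P(\emptyset) = \{\emptyset\}$, which we place entirely in bin $-1$, so $C_{0,-1} = \{\emptyset\}$ and $C_{0,j} = \emptyset$ for $j \ge 0$. The initialization $A[-1] = (1,0)$ and $A[j] = (0,0)$ for $j \ge 0$ matches all four conditions trivially (condition 3 holds vacuously, since no bin with $j \ge 0$ is populated).

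For the inductive step, assume the invariant holds after processing items $\{1,\dots,i-1\}$, and write $h_{j'} = \lfloor \log_{1+\delta}(A[j'][1] + e^{u_i}) \rfloor$ for each $j'$ active in $A$ (using $A$ to denote the table at entry of the iteration for item $i$). Define the new partition by
\[
  C_{i,j} \;=\; C_{i-1,j} \;\cup\; \bigl\{\, C \cup \{i\} : C \in C_{i-1,j'} \text{ for some } j' \text{ with } h_{j'}=j\,\bigr\},
\]
with $C_{i,-1} = C_{i-1,-1} = \{\emptyset\}$; since $u_i > 0$ we have $h_{j'} \ge 0$, so no set containing $i$ is ever placed in bin $-1$, giving condition~1. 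Condition~2 is bookkeeping: for each $C \in C_{i-1,j}$ not augmented by $i$, the algorithm's first update multiplies $A[j][0]$ by $(1-p_i)$, matching the added factor $(1-p_i)$ in the product over $\{1,\dots,i\}\setminus C$; for each $C \in C_{i-1,j'}$ augmented to $C\cup\{i\}$, the update adds $A[j'][0]\cdot p_i$ to $A'[h_{j'}][0]$, matching the factor $p_i$. Condition~4 follows similarly: the max of $\sum_{a\in C} e^{u_a}$ over $C_{i,j}$ is either the old $A[j][1]$ (from sets not containing $i$) or $A[j'][1] + e^{u_i}$ for some $j'$ with $h_{j'}=j$, and the algorithm takes exactly this max.

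The main obstacle is condition~3, the two-sided exp-utility range. For sets unchanged from $C_{i-1,j}$, the upper bound $(1+\delta)^{j+1}$ still holds, and the lower bound $(1+\delta)^{j-i}$ is even looser than the inductive $(1+\delta)^{j-(i-1)}$. The subtle case is a set $C' = C\cup\{i\}$ with $C \in C_{i-1,j'}$, placed in new bin $h_{j'}$. The upper bound is direct from the definition of $h_{j'}$:
\[
  \sum_{a \in C'} e^{u_a} = \sum_{a\in C} e^{u_a} + e^{u_i} \le A[j'][1] + e^{u_i} < (1+\delta)^{h_{j'}+1}.
\]
For the lower bound, combine the inductive hypothesis $\sum_{a\in C} e^{u_a} \ge (1+\delta)^{j'-(i-1)}$ with $A[j'][1] < (1+\delta)^{j'+1}$ to obtain $A[j'][1] < (1+\delta)^{i}\cdot \sum_{a\in C} e^{u_a}$, and thus
\[
  (1+\delta)^{h_{j'}} \le A[j'][1] + e^{u_i} < (1+\delta)^{i}\Bigl(\sum_{a\in C} e^{u_a} + e^{u_i}\Bigr) = (1+\delta)^{i}\cdot \sum_{a\in C'} e^{u_a},
\]
which yields $\sum_{a\in C'} e^{u_a} > (1+\delta)^{h_{j'}-i}$ as required. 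This step is the key place where the widening of the lower-bound window by a factor of $1+\delta$ per iteration is exactly what is needed to absorb the approximation error introduced by using $A[j'][1]$ as a proxy for the true (variable) total utility of sets in bin $j'$. The induction then closes and all four conditions carry through.
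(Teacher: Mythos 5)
Your proof is correct and follows essentially the same inductive argument as the paper: the same partition update $C_{i,j}=C_{i-1,j}\cup\{C\cup\{i\}:C\in C_{i-1,j'},\,h_{j'}=j\}$, the same bookkeeping for conditions 1, 2, and 4, and the same key step for condition 3 of bounding $A[j'][1]<(1+\delta)^i\sum_{a\in C}e^{u_a}$ by combining the inductive lower and upper bounds on bin $j'$. The only nitpick is that your lower-bound argument for condition 3 invokes the condition-3 inductive hypothesis on bin $j'$, which does not exist when $j'=-1$ (i.e., when the new set is $\{i\}$ itself); the paper treats this $C'=\emptyset$ case separately, where the claim follows directly from $h_{-1}=\lfloor\log_{1+\delta}e^{u_i}\rfloor$ since $A[-1][1]=0$.
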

  
  \begin{proof}
    By induction on $i$. 
    
    Base case: $i = 0$. Before processing any items, define the partition to be $C_{0, -1} = \{\emptyset\}$ and every other $C_{0, j} = \emptyset$, satisfying claim 1. By lines 7 and 8, we have $A[-1][0] = 1$ and every other $A[j][0] = 0$, satisfying claim 1. Claim 3 is vacuously true, since all the $C_{0, j}$ with $0 \le j \le s$ are empty. Finally, claim 4 also holds since all the $A[j][1] = 0$ (either $C = \emptyset$ or $C_{0, j} = \emptyset$).
    
    Inductive case: $i \ge 1$. By inductive hypothesis, the $C_{i-1, j}$ form a partition of $\mathcal P (\{1, \dots, i -1\})$. Thus, every set in $\mathcal P (\{1, \dots, i\})$ is either in some $C_{i-1, j}$ or is obtained by adding item $i$ to a set in some $C_{i-1, j}$. To make the new partition, we'll start with the old one and, for each $C_{i-1, j}$, add all of its sets plus $i$ to the collection $C_{i, h_j}$, with $h_j$ defined in line 14. We can verify that $h_j$ is always a valid index:
    \begin{itemize}
      \item Since the utilities are all positive and $A[j][1]$ is non-negative, $ \lfloor \log_{1+\delta}(A[j][1] + e^{u_i})\rfloor \ge 0$.
      \item Since $A[j][1] $ is the sum of exp-utilities of some set $C$ in $C_{i-1, j}$ by inductive hypothesis (of claim 4), adding a new exp-utility cannot cause $h$ to exceed $s$, which is defined using the sum of all item exp-utilities.
    \end{itemize}
      Formally, the new partition $C_{i,j}$ can be written as follows: 
    \begin{align*}
      C_{i,j} = C_{i-1,j} \cup \{C \cup \{i\}, \text{ for all } C \in C_{i-1, \ell}, \text{ for all } \ell \text{ s.t.\ } h_\ell =j\}.
    \end{align*}
    
  In this new partition, claim 1 still holds: since $h_j \ge 0$, we leave $C_{i, -1}$ unchanged. We can show claim 2 still holds as well. Just before executing line 16, lines 11 and 15 have resulted in:
    \begin{align*}
      A'[j][0] &= A[j][0] \cdot (1-p_i) + \sum_{\ell : h_\ell = j, A[\ell][0] > 0} A[\ell][0] \cdot p_i.
    \end{align*}
    (Note that the condition $A[\ell][0] > 0$ prevents us from adding $i$ to empty collections of sets.) By inductive hypothesis and our construction of $C_{i,j}$,  
        \begin{align*}
      A'[j][0] &= \sum_{C \in C_{i-1,j}}\left(\prod_{a \in C}p_a\right) \left(\prod_{b \in \{1, \dots, i-1\}\setminus C}(1-p_b)\right) \cdot (1-p_i) \\
      &\quad+ \sum_{\substack{\ell : h_\ell = j,\\ A[\ell][0] > 0}} \sum_{C \in C_{i-1,\ell}}\left(\prod_{a \in C}p_a\right) \left(\prod_{b \in \{1, \dots, i-1\}\setminus C}(1-p_b)\right) \cdot p_i\\
      &= \sum_{C \in C_{i,j}, i \notin C}\left(\prod_{a \in C}p_a\right) \left(\prod_{b \in \{1, \dots, i\}\setminus C}(1-p_b)\right)  \\
      &\quad+ \sum_{C \in C_{i,j}, i \in C}\left(\prod_{a \in C}p_a\right) \left(\prod_{b \in \{1, \dots, i\}\setminus C}(1-p_b)\right)\\
      &= \sum_{C \in C_{i,j}}\left(\prod_{a \in C}p_a\right) \left(\prod_{b \in \{1, \dots, i\}\setminus C}(1-p_b)\right).
    \end{align*}
    Thus claim 2 is satisfied at the end of the for loop iteration. Claim 4 follows from our construction of $C_{i,j}$, inductive hypothesis, and lines 11 and 15: whenever we add new sets to $C_{i, j}$ with higher exp-utility, we also update $A[j][1]$ to be the larger of the new and old exp-utility sums. Finally, we show claim 3. 
    
    Any set $C \in C_{i, j}$ was either (a) in $C_{i-1, j}$ or (b) was just added by including $i$. For carried-over sets, we have $(1+\delta)^{j - (i-1)}\le \sum_{a \in C}e^{u_a} < (1+\delta)^{j + 1}$ by inductive hypothesis, so the claim still holds with $i$. For newly added sets, we know they are of the form $C = C' \cup \{i\}$, where $C' \in C_{i-1, \ell}$ such that $h_\ell = j$. 
    
    If $C' = \emptyset$, then $j = h_\ell = \lfloor \log_{1+\delta}(e^{u_i})\rfloor$. Thus $\log_{1+\delta}e^{u_i} - 1 < j$ and $j \le \log_{1+\delta}e^{u_i}$. We can rewrite these inequalities as $e^{u_i} < (1+\delta)^{ j + 1} $ and $(1+\delta)^j \le e^{u_i}$. Since $i \ge 0$, this shows claim 3 is satisfied.
    
    Now suppose $C' \ne \emptyset$. By inductive hypothesis (claim 3), we then have:
    \begin{align*}
      (1+\delta)^{\ell - (i-1)}\le \sum_{a \in C'}e^{u_a} < (1+\delta)^{\ell + 1}
    \end{align*}
  
    We also know by inductive hypothesis (claim 4) that $A[\ell][1] = \max_{C \in C_{i-1, j}}\sum_{a \in C}e^{u_a}$. $C'$ is one such $C$, so we have:
    \begin{align*}
        (1+\delta)^{\ell - (i-1)}\le \sum_{a \in C'}e^{u_a} \le A[\ell][1] < (1+\delta)^{\ell + 1}
    \end{align*}
Using these bounds, we can then bound how much larger $A[\ell][1]$ is than the exp-utility sum of $C'$:
    \begin{align*}
      &\frac{A[\ell][1]}{\sum_{a \in C'} e^{u_a}} < \frac{(1+\delta)^{\ell+1}}{(1+\delta)^{\ell - i + 1}} = (1+\delta)^{i}\\
      \Rightarrow \quad & A[\ell][1] < (1 + \delta)^i \sum_{a \in C'} e^{u_a}.
    \end{align*}
    We will now use this fact to establish the bounds in claim 3.  For the lower bound:
      \begin{align*}
      &j = h_\ell = \lfloor \log_{1+\delta}(A[\ell][1] + e^{u_i})\rfloor  \\
      \Rightarrow\quad & j \le \log_{1+\delta}((1 + \delta)^i \sum_{a \in C'}e^{u_a} + e^{u_i}) \tag{IH, claim 4}\\
      \Rightarrow\quad & (1+\delta)^j \le (1 + \delta)^i \sum_{a \in C'}e^{u_a} + e^{u_i}\tag{exponentiate}\\
        \Rightarrow\quad & (1+\delta)^j \le (1 + \delta)^i \sum_{a \in C'}e^{u_a} + (1+\delta)^i e^{u_i}\tag{increase RHS}\\
      \Rightarrow\quad & (1+\delta)^j \le (1 + \delta)^i \sum_{a \in C}e^{u_a} \tag{fold into sum}\\
          \Rightarrow\quad & (1+\delta)^{j - i} \le \sum_{a \in C}e^{u_a}.\\
    \end{align*}
    For the upper bound:
    \begin{align*}
      &j = h_\ell = \lfloor \log_{1+\delta}(A[\ell][1] + e^{u_i})\rfloor  \\
      \Rightarrow\quad & j+1 > \log_{1+\delta}(A[\ell][1] + e^{u_i})\\
      \Rightarrow\quad & (1+\delta)^{j+1} > A[\ell][1] + e^{u_i} \ge \sum_{a \in C'}e^{u_a} + e^{u_i} = \sum_{a \in C} e^{u_a}.\\
    \end{align*}
    Thus all of the claims still hold after iteration $i$. 
  \end{proof}

We can now prove \Cref{alg:approx-plc} is correct.
\Approxalg*
\begin{proof}
  By \Cref{lemma:approx-alg}, at the end of the nested loops of \Cref{alg:approx-plc}, there is a partition $\{C_{-1}, \dots, C_{s}\}$ of the subsets of $\overline r$ such that $A[j][0]$ stores the total consideration probabilities of the sets in $C_j$ and such that the exp-utility sums of every set in $C_j$ is between $(1+\delta)^{j - n}$ and $(1+\delta)^{j+1}$ (since $|\overline r| \le n $).  Thus,
  \begin{align*}
   &\frac{\prod_{i \in r} p_i}{z_{k,p}} \sum_{j = -1}^s A[j]  \prod_{i =1}^k \frac{e^{u_i}}{(1+\delta)^{j} + \sum_{a \in \{r_{i}, \dots, r_k\}} e^{u_{a}}}\\
   &=\frac{\prod_{i \in r} p_i}{z_{k,p}} \sum_{j = -1}^s \sum_{C \in C_{j}}\left(\prod_{a \in C}p_a\right) \left(\prod_{b \in \overline r \setminus C}(1-p_b)\right)  \prod_{i =1}^k \frac{e^{u_i}}{(1+\delta)^{j} + \sum_{a \in \{r_{i}, \dots, r_k\}} e^{u_{a}}}\\
      &= \sum_{j = -1}^s \sum_{C \in C_{j}}\frac{1}{z_{k,p}}\left(\prod_{a \in C\cup r}p_a\right) \left(\prod_{b \in \overline r \setminus C}(1-p_b)\right)  \prod_{i =1}^k\frac{e^{u_i}}{(1+\delta)^{j} + \sum_{a \in \{r_{i}, \dots, r_k\}} e^{u_{a}}}\\
      &= \sum_{j = -1}^s \sum_{C \in C_{j}}\PrC(C\cup r)  \prod_{i =1}^k\frac{e^{u_i}}{(1+\delta)^{j} + \sum_{a \in \{r_{i}, \dots, r_k\}} e^{u_{a}}}.
  \end{align*}
  Note that the product is very close to the desired Plackett--Luce probability:
    \begin{align*}
  \PL(r \mid C\cup r) &=  \prod_{i =1}^k\frac{e^{u_i}}{\sum_{a \in C\cup \{r_{i}, \dots, r_k\}} e^{u_{a}}}\\
  &= \prod_{i =1}^k\frac{e^{u_i}}{\sum_{a \in C} e^{u_{a}} + \sum_{a \in \{r_{i}, \dots, r_k\}} e^{u_{a}}}
\end{align*}

  We'll show that the error introduced by estimating the exp-utility of $C$ by $(1+\delta)^{j}$ is not too large. Consider the size of the maximum possible underestimation factor (when $\sum_{a \in C} e^{u_{a}}$ achieves its upper bound $(1+\delta)^{j +1}$). Let $e_{i} = \sum_{a \in \{r_{i}, \dots, r_k\}} e^{u_{a}}$
\begin{align*}
  \frac{e^{u_i}}{(1+\delta)^{j} + e_i} / \frac{e^{u_i}}{(1+\delta)^{j+1} + e_i} &= \frac{(1+\delta)^{j +1} + e_i}{(1+\delta)^{j} + e_i}\\
  &\le \frac{(1+\delta)^{j +1}}{(1+\delta)^{j}}\\
  &= (1+\delta).
  \end{align*}
  Now consider the maximum possible overestimation factor (when $\sum_{a \in C} e^{u_{a}}$ achieves its lower bound $(1+\delta)^{j -n}$):
\begin{align*}
  \frac{e^{u_i}}{(1+\delta)^{j-n} + e_i} / \frac{e^{u_i}}{(1+\delta)^{j} + e_i} &= \frac{(1+\delta)^{j} + e_i}{(1+\delta)^{j-n} + e_i}\\
  &\le \frac{(1+\delta)^{j}}{(1+\delta)^{j-n}}\\
  &= (1+\delta)^n.
  \end{align*}

  Thus, combining the largest possible under- and overestimation errors for the PL probability (incurred $k$ times, once for each term of the product),
  \begin{align*}
    &\sum_{j = -1}^s \sum_{C \in C_{j}}\PrC(C\cup r)  \PL(r\mid r \cup C) (1+\delta)^{-k}\\
    &\le \sum_{j = -1}^s \sum_{C \in C_{j}}\PrC(C\cup r)  \prod_{i =1}^k\frac{e^{u_i}}{(1+\delta)^{j} + \sum_{a \in \{r_{i}, \dots, r_k\}} e^{u_{a}}}\\
    &\le \sum_{j = -1}^s \sum_{C \in C_{j}}\PrC(C\cup r)  \PL(r\mid r \cup C) (1+\delta)^{nk}.
  \end{align*}
  Rewriting the sums, since we know the $C_j$ partition the subsets of $\overline r$,
    \begin{align*}
    & (1+\delta)^{-k} \cdot \PLC(r) = \sum_{C \subseteq \U}\PrC(C)  \PL(r\mid C) (1+\delta)^{-k}\\
    &\le \sum_{j = -1}^s \sum_{C \in C_{j}}\PrC(C\cup r)  \prod_{i =1}^k\frac{e^{u_i}}{(1+\delta)^{j} + \sum_{a \in \{r_{i}, \dots, r_k\}} e^{u_{a}}}\\
    &\le \sum_{C \subseteq \U}\PrC(C)  \PL(r\mid C) \PL(r\mid r \cup C) (1+\delta)^{nk} = (1+\delta)^{nk}\cdot  \PLC(r).
  \end{align*}
    Since $e^x \ge (1 + x / n)^n$ for $x \le n$, we have that $(1 + \delta)^{nk} = (1 + \epsilon/(2nk))^{nk} \le e^{\epsilon/2} \le 1 + \epsilon $ (the final inequality follows from the identity $e^x \le 1 + x + x^2$ for $x \le 1$). We therefore return an estimate $\hat p$ satisfying:
    \begin{align*}
      \frac{\PLC(r)}{1+\epsilon} \le \hat p \le  (1+\epsilon)\cdot  \PLC(r).
    \end{align*}
    
  Finally, we establish the runtime of the algorithm. The outer loop runs $n - k = O(n)$ times. Line 11 takes $O(s)$ time, the inner loop runs $O(s)$ times, and lines 13-15 take constant time per iteration. Thus the full nested loop takes $O(ns)$ time. The \textsf{DC} algorithm takes time $O(n^2)$. Line 18 takes $O(ns)$ time, for a total runtime of $O(n^2 + ns)$. Now we can bound $s$ :
  \begin{align*}
    s &= \left\lfloor \log_{1+\delta} \sum_{i \in \overline r} e^{u_i}\right\rfloor\\
    &\le \frac{\ln \sum_{i \in \overline r} e^{u_i}}{\ln (1+ \delta)}\\
     &\le (1 + \delta)\frac{\ln \sum_{i \in \overline r} e^{u_i}}{\delta}\tag{since $\ln (1+x) \ge \frac{x}{1+x}$ for $x > -1$}\\
     &= \frac{(2kn)\ln \sum_{i \in \overline r} e^{u_i}}{\epsilon} + \ln \sum_{i \in \overline r} e^{u_i}\tag{since $\delta = \epsilon / (2kn)$}\\
     &\le \frac{(2kn)(\max_{i \in \overline r}u_i + \ln n)}{\epsilon} + (\max_{i \in \overline r}u_i + \ln n) \tag{property of LogSumExp}
  \end{align*}
Thus $s = O(kn\max\{m, \log n\}/\epsilon)$ and the claimed runtime follows.
\end{proof}

\section{ADDITIONAL FIGURES}

\begin{figure}
\centering
\scalebox{0.75}{
  \begin{tikzpicture}[x=0.4\textwidth,y=2.6mm,yscale=-1]
      \tikzset{
        end point/.style = {fill=black, inner sep=0.125pc, circle},
        baseline end point/.style = {fill=blue!25, inner sep=0.2pc, rectangle},
        inner line/.style = {line width=0.125pc,black},
        outer line/.style = {line width=0.25pc,blue!25},
        state name/.style = {fill=white, inner xsep=0.0625pc, inner ysep=0pc, text=black, font=\small}
      }
      \draw (0,0) -- (0,50) ;
      \draw (1,0) -- (1,50) ;
      \draw[very thin, font=\small] (0,0) grid[xstep=0.2, ystep=50] (1,50);
      \foreach \i in {0.0, 0.2, 0.4, 0.6, 0.8, 1.0} {
        \node[above=0pc of {\i,0}, gray, overlay] {\i};
      };
      
      \csvreader{data/bounds_alpha2.csv}{1=\index,2=\statename,3=\lowerbound,4=\lowerboundbaseline,5=\upperbound,6=\upperboundbaseline}{
        \node[baseline end point] (baseline left) at (\lowerboundbaseline,\thecsvrow) {};
        \pgfmathparse{\upperboundbaseline > 1.0 ? 1 : \upperboundbaseline}
        \node[baseline end point] (baseline right) at (\pgfmathresult,\thecsvrow) {};
        \draw[outer line] (baseline left) edge (baseline right);
        \node[end point] (left) at (\lowerbound,\thecsvrow) {};
        \node[end point] (right) at (\upperbound,\thecsvrow) {};
        \pgfmathparse{\upperboundbaseline > 0.8 ? 1 : 0}
        \ifthenelse{\pgfmathresult>0}{
          \draw[inner line] (left) edge node[state name,pos=0.5] {\statename} (right);
        }{
          \draw[inner line] (left) edge (right);
          \node[right=0.125pc of baseline right,state name] {\statename};
        }
      }
      \node[] at (0.5, 52) {\Large $\alpha = 2$};

    \end{tikzpicture}}
    \quad
    \scalebox{0.75}{
      \begin{tikzpicture}[x=0.4\textwidth,y=2.6mm,yscale=-1]
      \tikzset{
        end point/.style = {fill=black, inner sep=0.125pc, circle},
        baseline end point/.style = {fill=blue!25, inner sep=0.2pc, rectangle},
        inner line/.style = {line width=0.125pc,black},
        outer line/.style = {line width=0.25pc,blue!25},
        state name/.style = {fill=white, inner xsep=0.0625pc, inner ysep=0pc, text=black, font=\small}
      }
      \draw (0,0) -- (0,50) ;
      \draw (1,0) -- (1,50) ;
      \draw[very thin, font=\small] (0,0) grid[xstep=0.2, ystep=50] (1,50);
      \foreach \i in {0.0, 0.2, 0.4, 0.6, 0.8, 1.0} {
        \node[above=0pc of {\i,0}, gray, overlay] {\i};
      };
      
      \csvreader{data/bounds_alpha3.csv}{1=\index,2=\statename,3=\lowerbound,4=\lowerboundbaseline,5=\upperbound,6=\upperboundbaseline}{
        \node[baseline end point] (baseline left) at (\lowerboundbaseline,\thecsvrow) {};
        \pgfmathparse{\upperboundbaseline > 1.0 ? 1 : \upperboundbaseline}
        \node[baseline end point] (baseline right) at (\pgfmathresult,\thecsvrow) {};
        \draw[outer line] (baseline left) edge (baseline right);
        \node[end point] (left) at (\lowerbound,\thecsvrow) {};
        \node[end point] (right) at (\upperbound,\thecsvrow) {};
        \pgfmathparse{\upperboundbaseline > 0.8 ? 1 : 0}
        \ifthenelse{\pgfmathresult>0}{
          \draw[inner line] (left) edge node[state name,pos=0.5] {\statename} (right);
        }{
          \draw[inner line] (left) edge (right);
          \node[right=0.125pc of baseline right,state name] {\statename};
        }
      }
      \node[] at (0.5, 52) {\Large $\alpha = 3$};
    \end{tikzpicture}}
  \quad
    \scalebox{0.75}{
      \begin{tikzpicture}[x=0.4\textwidth,y=2.6mm,yscale=-1]
      \tikzset{
        end point/.style = {fill=black, inner sep=0.125pc, circle},
        baseline end point/.style = {fill=blue!25, inner sep=0.2pc, rectangle},
        inner line/.style = {line width=0.125pc,black},
        outer line/.style = {line width=0.25pc,blue!25},
        state name/.style = {fill=white, inner xsep=0.0625pc, inner ysep=0pc, text=black, font=\small}
      }
      \draw (0,0) -- (0,50) ;
      \draw (1,0) -- (1,50) ;
      \draw[very thin, font=\small] (0,0) grid[xstep=0.2, ystep=50] (1,50);
      \foreach \i in {0.0, 0.2, 0.4, 0.6, 0.8, 1.0} {
        \node[above=0pc of {\i,0}, gray, overlay] {\i};
      };
      
      \csvreader{data/bounds_alpha4.csv}{1=\index,2=\statename,3=\lowerbound,4=\lowerboundbaseline,5=\upperbound,6=\upperboundbaseline}{
        \node[baseline end point] (baseline left) at (\lowerboundbaseline,\thecsvrow) {};
        \pgfmathparse{\upperboundbaseline > 1.0 ? 1 : \upperboundbaseline}
        \node[baseline end point] (baseline right) at (\pgfmathresult,\thecsvrow) {};
        \draw[outer line] (baseline left) edge (baseline right);
        \node[end point] (left) at (\lowerbound,\thecsvrow) {};
        \node[end point] (right) at (\upperbound,\thecsvrow) {};
        \pgfmathparse{\upperboundbaseline > 0.8 ? 1 : 0}
        \ifthenelse{\pgfmathresult>0}{
          \draw[inner line] (left) edge node[state name,pos=0.5] {\statename} (right);
        }{
          \draw[inner line] (left) edge (right);
          \node[right=0.125pc of baseline right,state name] {\statename};
        }
      }
      \node[] at (0.5, 52) {\Large $\alpha = 4$};
    \end{tikzpicture}}\\[5mm]
    \scalebox{0.75}{
  \begin{tikzpicture}[x=0.4\textwidth,y=2.6mm,yscale=-1]
      \tikzset{
        end point/.style = {fill=black, inner sep=0.125pc, circle},
        baseline end point/.style = {fill=blue!25, inner sep=0.2pc, rectangle},
        inner line/.style = {line width=0.125pc,black},
        outer line/.style = {line width=0.25pc,blue!25},
        state name/.style = {fill=white, inner xsep=0.0625pc, inner ysep=0pc, text=black, font=\small}
      }
      \draw (0,0) -- (0,50) ;
      \draw (1,0) -- (1,50) ;
      \draw[very thin, font=\small] (0,0) grid[xstep=0.2, ystep=50] (1,50);
      \foreach \i in {0.0, 0.2, 0.4, 0.6, 0.8, 1.0} {
        \node[above=0pc of {\i,0}, gray, overlay] {\i};
      };
      
      \csvreader{data/bounds_alpha5.csv}{1=\index,2=\statename,3=\lowerbound,4=\lowerboundbaseline,5=\upperbound,6=\upperboundbaseline}{
        \node[baseline end point] (baseline left) at (\lowerboundbaseline,\thecsvrow) {};
        \pgfmathparse{\upperboundbaseline > 1.0 ? 1 : \upperboundbaseline}
        \node[baseline end point] (baseline right) at (\pgfmathresult,\thecsvrow) {};
        \draw[outer line] (baseline left) edge (baseline right);
        \node[end point] (left) at (\lowerbound,\thecsvrow) {};
        \node[end point] (right) at (\upperbound,\thecsvrow) {};
        \pgfmathparse{\upperboundbaseline > 0.8 ? 1 : 0}
        \ifthenelse{\pgfmathresult>0}{
          \draw[inner line] (left) edge node[state name,pos=0.5] {\statename} (right);
        }{
          \draw[inner line] (left) edge (right);
          \node[right=0.125pc of baseline right,state name] {\statename};
        }
      }
      \node[] at (0.5, 52) {\Large $\alpha = 5$};

    \end{tikzpicture}}
    \quad
    \scalebox{0.75}{
      \begin{tikzpicture}[x=0.4\textwidth,y=2.6mm,yscale=-1]
      \tikzset{
        end point/.style = {fill=black, inner sep=0.125pc, circle},
        baseline end point/.style = {fill=blue!25, inner sep=0.2pc, rectangle},
        inner line/.style = {line width=0.125pc,black},
        outer line/.style = {line width=0.25pc,blue!25},
        state name/.style = {fill=white, inner xsep=0.0625pc, inner ysep=0pc, text=black, font=\small}
      }
      \draw (0,0) -- (0,50) ;
      \draw (1,0) -- (1,50) ;
      \draw[very thin, font=\small] (0,0) grid[xstep=0.2, ystep=50] (1,50);
      \foreach \i in {0.0, 0.2, 0.4, 0.6, 0.8, 1.0} {
        \node[above=0pc of {\i,0}, gray, overlay] {\i};
      };
      
      \csvreader{data/bounds_alpha6.csv}{1=\index,2=\statename,3=\lowerbound,4=\lowerboundbaseline,5=\upperbound,6=\upperboundbaseline}{
        \node[baseline end point] (baseline left) at (\lowerboundbaseline,\thecsvrow) {};
        \pgfmathparse{\upperboundbaseline > 1.0 ? 1 : \upperboundbaseline}
        \node[baseline end point] (baseline right) at (\pgfmathresult,\thecsvrow) {};
        \draw[outer line] (baseline left) edge (baseline right);
        \node[end point] (left) at (\lowerbound,\thecsvrow) {};
        \node[end point] (right) at (\upperbound,\thecsvrow) {};
        \pgfmathparse{\upperboundbaseline > 0.8 ? 1 : 0}
        \ifthenelse{\pgfmathresult>0}{
          \draw[inner line] (left) edge node[state name,pos=0.5] {\statename} (right);
        }{
          \draw[inner line] (left) edge (right);
          \node[right=0.125pc of baseline right,state name] {\statename};
        }
      }
      \node[] at (0.5, 52) {\Large $\alpha = 6$};
    \end{tikzpicture}}
  \quad
    \scalebox{0.75}{
      \begin{tikzpicture}[x=0.4\textwidth,y=2.6mm,yscale=-1]
      \tikzset{
        end point/.style = {fill=black, inner sep=0.125pc, circle},
        baseline end point/.style = {fill=blue!25, inner sep=0.2pc, rectangle},
        inner line/.style = {line width=0.125pc,black},
        outer line/.style = {line width=0.25pc,blue!25},
        state name/.style = {fill=white, inner xsep=0.0625pc, inner ysep=0pc, text=black, font=\small}
      }
      \draw (0,0) -- (0,50) ;
      \draw (1,0) -- (1,50) ;
      \draw[very thin, font=\small] (0,0) grid[xstep=0.2, ystep=50] (1,50);
      \foreach \i in {0.0, 0.2, 0.4, 0.6, 0.8, 1.0} {
        \node[above=0pc of {\i,0}, gray, overlay] {\i};
      };
      
      \csvreader{data/bounds_alpha7.csv}{1=\index,2=\statename,3=\lowerbound,4=\lowerboundbaseline,5=\upperbound,6=\upperboundbaseline}{
        \node[baseline end point] (baseline left) at (\lowerboundbaseline,\thecsvrow) {};
        \pgfmathparse{\upperboundbaseline > 1.0 ? 1 : \upperboundbaseline}
        \node[baseline end point] (baseline right) at (\pgfmathresult,\thecsvrow) {};
        \draw[outer line] (baseline left) edge (baseline right);
        \node[end point] (left) at (\lowerbound,\thecsvrow) {};
        \node[end point] (right) at (\upperbound,\thecsvrow) {};
        \pgfmathparse{\upperboundbaseline > 0.8 ? 1 : 0}
        \ifthenelse{\pgfmathresult>0}{
          \draw[inner line] (left) edge node[state name,pos=0.5] {\statename} (right);
        }{
          \draw[inner line] (left) edge (right);
          \node[right=0.125pc of baseline right,state name] {\statename};
        }
      }
      \node[] at (0.5, 52) {\Large $\alpha = 7$};
    \end{tikzpicture}}
\caption{Versions of \Cref{fig:swaps-bounds} (right) with $\alpha$ varying from $2$ to $7$, showing how our bounds change as the lower bound on average number of states considered grows from $\alpha k = 6$  to $\alpha k = 21$.}

\end{figure}


\end{document}